\documentclass[12pt,a4paper,UTF8]{article}
\usepackage{graphicx} 
\usepackage{listings} 
\usepackage{color} 
\usepackage{booktabs} 
\usepackage{pdflscape} 
\usepackage{hyperref} 
\usepackage{xcolor}
\usepackage{amssymb}
\usepackage{amsmath} 
\usepackage{mathrsfs}
\usepackage{geometry}
\usepackage{subcaption}
\usepackage{float}

\usepackage[numbers, sort&compress, square, comma]{natbib}
\usepackage{amsthm}

\usepackage{algorithm}
\usepackage{algpseudocode}

\usepackage{fontspec}
\usepackage{fancyhdr}
\definecolor{lightblue}{RGB}{173, 216, 230} 

\newif\ifshowrevision
\showrevisionfalse   

\newcommand{\TG}[1]{{\color{black}#1}}

\ifshowrevision
  \newcommand{\revision}[1]{{\color{blue}#1}}
\else
  \newcommand{\revision}[1]{#1}  
\fi

\newtheorem{definition}{Definition}
\newtheorem{property}{Property}
\newtheorem{theorem}{Theorem}
\newtheorem{corollary}{Corollary}

\author{
 \normalsize{
Yixin Wang$^{1,2,}$\footnotemark[2]\ ,
Ting Gao$^{1,2,3}$\footnotemark[1]\ ,
Jinqiao Duan$^{4,5,}$\footnotemark[3]
}\\[10pt]
\footnotesize{$^1$ School of Mathematics and Statistics, Huazhong University of Science and Technology, Wuhan, China} \\
\footnotesize{$^2$ Center for Mathematical Science, Huazhong University of Science and Technology, Wuhan, China} \\
\footnotesize{$^3$ Steklov-Wuhan Institute for Mathematical Exploration, Huazhong University of Science and Technology, China} \\
\footnotesize{$^4$ Department of Mathematics and Department of Physics, Great Bay University, Dongguan, China} \\
\footnotesize{$^5$ Guangdong Provincial Key Laboratory of Mathematical and Neural Dynamical Systems, Dongguan, China. }
}

\begin{document}

    \title{Beyond Distance: Quantifying Point Cloud Dynamics with Persistent Homology and Dynamic Optimal Transport}
    \date{}
    \maketitle
    \begin{abstract}
We introduce a framework for analyzing topological tipping in time evolutionary point clouds by extending the recently proposed Topological Optimal Transport (TpOT) distance. While TpOT unifies geometric, homological and higher‐order relations into one metric, its global scalar distance \revision{can obscure transient, localized structural reorganizations during dynamic phase transitions}. To overcome this limitation, we \revision{present a hierarchical dynamic evaluation framework driven by a novel topological and hypergraph reconstruction strategy}. \revision{Instead of directly interpolating abstract network parameters, our method interpolates the underlying spatial geometry and rigorously re-computes the valid topological structures, ensuring physical fidelity}. Along this geodesic, \revision{we introduce a set of multi-scale indicators: macroscopic metrics (Topological Distortion and Persistence Entropy) to capture global shifts, and a novel mesoscopic dual-perspective Hypergraph Entropy (node-perspective and edge-perspective) to detect highly sensitive, asynchronous local rewirings}. We further propagate \revision{the cycle-level} entropy change onto individual vertices \revision{to form a point-level topological field}. \revision{Extensive evaluations on physical dynamical systems (Rayleigh-Van der Pol limit cycles, Double-Well cluster fusion), high-dimensional biological aggregation (D'Orsogna model), and longitudinal stroke fMRI data} demonstrate the utility of combining transport‐based alignment with multi‐scale entropy diagnostics for dynamic topological analysis.

    \par\textbf{Keywords: }Dynamic Topological Optimal Transport, Multiscale Tipping, Persistence entropy, Hypergraph entropy, Medical imaging
    
\end{abstract}
    \section{Introduction}
    \hspace{1.5em} Complex systems (e.g., climate systems, biological networks, financial markets) often exhibit nonlinear phase transitions and critical \TG{phenomena} \cite{Feng2023EarlyWI}. Detecting their tipping points and structural changes is crucial for predicting systemic collapse and designing early-warning mechanisms \cite{Zhang2023EARLYWP}. Traditional statistical metrics, \TG{such as variance and autocorrelation coefficients, are employed to monitor} critical transitions and their driving networks \cite{liu2013dynamical}; \TG{however, they overlook} potential topological structural changes in dynamical systems. \TG{In contrast}, dynamical models like bifurcation theory require predefined \TG{differential} equations and perform poorly on real-world non-stationary systems with unknown couplings \cite{kuehn2011mathematical}. In \cite{Zhang2024ActionFA}, the authors introduce a novel Schrödinger bridge approach for early warning signals (EWS) in probability measures that align with the entropy production rate. \TG{Nevertheless}, aligning disparate distributions in a manner that is both mathematically rigorous and computationally tractable remains a fundamental challenge in interdisciplinary science.

    Originating from Monge's 1781 optimal transport problem, optimal transport (OT) \TG{theory} seeks \TG{to find couplings between} probability distributions as efficiently as possible with respect to a given cost function \cite{villani2008optimal}. A well-known distance measure, the Wasserstein distance, evaluates the geometric separation between supports, where the cost function \TG{is} induced by the distance function in a metric space. Recently, extensions of the optimal transport problem, such as the Gromov–Wasserstein (GW) distance and its variant, the Fused Gromov–Wasserstein (FGW) distance, have garnered increasing attention \TG{due to their applicability} in settings where probability distributions are defined on different metric spaces \cite{vayer2020fused, memoli2007use, memoli2011gromov}. Motivated by the goal of representing relations in complex systems through higher-order networks, recent work has applied a Gromov–Wasserstein variant known as co-optimal transport \cite{titouan2020co} to hypergraph modeling, demonstrating its effectiveness in both theoretical frameworks and practical applications \cite{chowdhury2024hypergraph}.

 Topological Data Analysis (TDA) is a mathematical framework rooted in algebraic topology that extracts multiscale topological features from high-dimensional complex data \cite{wasserman2018topological}. Giusti and Lee developed a computable feature map for paths of persistence diagrams \cite{giusti2023signatures}. Li et al. \TG{introduced} a flexible and probabilistic framework for tracking topological features in time-varying scalar fields by employing merge trees and partial optimal transport \cite{li2025flexible}. Numerous researchers have developed innovative methodologies for topological feature tracking and event detection by leveraging diverse tools from TDA, including persistence diagrams, merge trees, Reeb graphs, and Morse–Smale complexes. For example, Tanweer et al. introduced a TDA-based approach that uses superlevel persistence to mathematically quantify P-type bifurcations in stochastic systems through a "homological bifurcation plot," which \TG{illustrates} the changing ranks of 0th and 1st homology groups via Betti vectors \cite{tanweer2024topological}. Shamir et al. proposed a progressive isosurface algorithm that predicts the contour at time step $t+1$ based on the contour at time step $t$\cite{bajaj2002progressive}. Doraiswamy et al. described a framework for the exploration of cloud systems at \TG{multiple spatial and temporal} scales \TG{using} infrared(IR) brightness temperature images, which automatically extracts cloud clusters as contours for a given temperature threshold\cite{doraiswamy2013exploration}. A comprehensive review of methodologies for topological feature tracking and structural change detection is presented in \cite{yan2021scalar}.

 In recent years, \TG{there has been growing interest in} extending classical graph entropy\revision{\cite{Trucco1956ANO,Rashevsky1955LifeIT,dehmer2008novel,Dehmer2008InformationPI}} to hypergraphs, as hypergraphs are capable of capturing higher-order interactions that conventional pairwise networks cannot. The \TG{concept} of hypergraph entropy was first introduced by Simonyi (1996) \cite{simonyi1996entropy} within an information-theoretic framework. \TG{Subsequent studies have proposed alternative formulations, including} entropy vectors derived from partial hypergraphs \cite{bloch2019new} and tensor-based entropy for uniform hypergraphs \cite{chen2020tensor}. \TG{Additionally}, entropy-maximization models \TG{have been employed} to generate random hypergraphs, serving as useful null models for real-world systems \cite{saracco2025entropy}. These developments underscore the versatility of hypergraph entropy as a tool for quantifying uncertainty and complexity in higher-order networks, motivating its application to the analysis of hypergraph \TG{structures} derived from persistent homology.

In this work, we propose \TG{a set of dynamic distortion and} entropy indicators that integrate optimal transport, topological data analysis, and \TG{information theory for} hypergraphs. Our approach builds upon the recently introduced Topological Optimal Transport (TpOT) framework \cite{zhang2025topological}, which aligns point clouds while jointly optimizing geometric correspondence and topological fidelity through a principled coupling of their persistent homology classes. By \TG{incorporating} this trade-off between geometric and topological preservation, our metrics enable robust detection of multiscale structural shifts. \revision{To systematically capture these multiscale dynamics, our methodology proceeds in four fundamental stages (Figure \ref{fig:geodesic_interpolation}): computing the initial TpOT spatial coupling, performing geometric interpolation along the underlying geodesic, rigorously reconstructing valid topological and hypergraph structures to ensure physical fidelity, and extracting multiscale early warning indicators. Driven by this pipeline, our main contributions are listed as follows. First, we propose a dynamic hypergraph reconstruction strategy that integrate interpolation of underlying geometric information. Second, we propose dynamic distortion and entropy as early warning indicators for multi-scale tipping detection, notably a novel mesoscopic dual-perspective hypergraph entropy ($\mathrm{HE}_V$ and $\mathrm{HE}_E$) that uniquely captures asynchronous structural decoupling. We further propagate the cyle-level entropy change onto individual vertices to form a point-level topological field that identifies key local transformations. Third, we provide rigorous theoretical guarantees for these metrics, which are precisely validated across physical dynamical systems, biological aggregations, and clinical stroke fMRI data.

The structure of this paper is organized as follows: Section \ref{sec:methodology} describes our proposed topological \& hypergraph reconstruction and dynamic distortions \& entropies framework for multiscale tipping detection. Especially, Sections \ref{ssec:geodesic_losses} to \ref{ssec:point-level-entropy} detail the dynamic reconstruction strategy, the dual-perspective entropy formulations with their theoretical proofs, and the point-level localized mapping, respectively. Section \ref{sec:experiments} presents comprehensive experimental validations, finally we summarize conclusions and future work in Section \ref{sec:conclusion}.}

\section{Methodology}\label{sec:methodology}
\hspace{1.5em} In many applications—from neuroscience and materials science to machine learning—persistent homology (PH) has proven effective for dimension reduction, feature generation, and hypothesis testing. However, standard diagram-based comparisons often disregard the underlying geometry of representative cycles. Recent extensions \TG{address this limitation} by extracting explicit cycles and organizing them into higher-order structures such as PH-hypergraphs, where vertices \TG{correspond to} data points and hyperedges encode cycle memberships \cite{barbensi2022hypergraphs}. It has also been shown that the Topological Optimal Transport (TpOT) framework \TG{not only} provides a powerful distance for comparing complex data but also endows the resulting space with rich geometric structure, enabling interpolation, extrapolation, and barycenter computations among measure–topological networks \cite{zhang2025topological}. Detailed mathematical definitions and properties are provided in the Supplementary Material~\ref{Supplementary material:TpOT}.

Before introducing our extensions, we critically assess the properties of the Topological Optimal Transport (TpOT) framework. This analysis motivates the design choices presented in this section.

TpOT simultaneously aligns point–point affinities via a Gromov–Wasserstein term, persistence diagram coordinates via a Wasserstein term, and point–cycle incidences via a co-optimal transport term, thereby integrating geometric, homological, and higher-order relational information within a single optimization. The space of measure topological networks \(\bigl(\mathcal{P}/\!\!\sim_w,\,d_{\mathrm{TpOT},p}\bigr)\) admits geodesics that are realized by convex combinations of kernels, birth–death embeddings, and incidence functions.  These geodesics enjoy non‐negative Alexandrov curvature. As a pseudo-metric, \(d_{\mathrm{TpOT},p}\) satisfies symmetry and the triangle inequality. The non-negative curvature property of the geodesic further guarantees convexity of squared‐distance functionals, which enables robust interpolation, barycenter computations, and clustering in the network space. However, the TpOT faces two limitations:
    \begin{itemize}
        \item \textbf{Static distortion loss masks evolution details.}
    The TpOT framework returns only \TG{a} scalar distance \TG{that quantifies} the cost of transporting \(P\) to \(P'\) along the optimal geodesic.  However, this endpoint‐only value omits the information about the intermediate distortions
    \(\mathcal{L}_{\mathrm{geom}}^t\), \(\mathcal{L}_{\mathrm{topo}}^t\), and \(\mathcal{L}_{\mathrm{hyper}}^t\)
    for \(t\in(0,1)\). As a result, any transient or evolving geometric or topological phenomena occurring between \(P\) and \(P'\) are \TG{entirely} masked, preventing \TG{the} localization of when—and \TG{to what extent} —significant structural changes occur along the interpolation path.  
        
    \item \textbf{Difficulty of detecting abrupt topological changes.} Constant-speed interpolation \TG{yields} smoothly varying loss curves. Consequently, abrupt topological changes, such as the sudden birth or death of long-lived cycles, \TG{appear} only as mild inflections, making them difficult to detect \TG{within} the aggregated TpOT objective.

    \item \textbf{Sensitivity of hyperparameters.} The choice of the entropic regularization weight \(\varepsilon\) critically \TG{influences} the \TG{trade-off} between fidelity and smoothness. Inappropriate settings may \TG{result in} overly diffuse couplings that obscure salient structure, or in unstable transport plans \TG{that are highly} sensitive to noise.
    \end{itemize}

    To \TG{address} these inherent limitations, we \TG{introduce two key extensions} in this work:

    \begin{itemize}
      \item \textbf{Dynamic distortion curves.}  
        Rather than report only the \TG{final scalar distance} \(d_{\mathrm{TpOT},p}\), we extract, at each interpolation time \(t\in[0,1]\), the three dynamic distortion loss functions
        \(\mathcal{L}_{\mathrm{geom}}^t\), \(\mathcal{L}_{\mathrm{topo}}^t\), and \(\mathcal{L}_{\mathrm{hyper}}^t\).  
        These curves \TG{precisely localize the timing} and magnitude of geometric, topological, and incidence deformations \TG{along the geodesic}.
    
      \item \textbf{Entropy‐based event detectors.} 
      We introduce \emph{persistence entropy} (PE) and \emph{hypergraph entropy} (HE) as complementary diagnostics. PE quantifies the Shannon entropy of the \TG{distribution of} barcode lengths, \TG{capturing} sensitivity to the abrupt appearance or disappearance of significant cycles. HE \TG{measures} the \TG{uniformity of} vertex–hyperedge incidence, highlighting sudden reorganizations in higher-order connectivity.
        
    \end{itemize}
    
    \TG{By integrating} these entropy-based indicators, we are able to continuously monitor and detect anomalous topological events in temporally evolving point clouds. In this section, we provide a detailed \TG{description} of the proposed pipeline for quantifying \TG{multi-scale tipping phenomena with geometric/topological/hypergraph distortions in time-evolving point clouds, which is illustrated in  Figure \ref{fig:geodesic_interpolation}}. 
    
    \TG{The structure of this section is as follow: First, we present the background knowledge of Measure Topology Network and TpOT for optimal coupling in Section \ref{ssec:network_construction} and Section \ref{ssec:tpot_compute}. Then, our main contributions on Hypergraph Reconstruction and early warning indicators with dynamic distortions are presented in Section \ref{ssec:geodesic_losses}, which explained tipping detection in four-steps aligned with  Figure \ref{fig:geodesic_interpolation}. Furthermore, we study various definitions of entropy as early warning indicators in Section \ref{ssec:entropy}. Our contribution also lies in novel definitions of Vertex-Perspective entropy, Hyperedge-Perspective entropy and Symmetric Hypergraph entropy, with theoretical proofs on sensitivity of abrupt topological transitions and topological upper bound. In addition, we also present point-level hypergraph entropy in Section \ref{ssec:point-level-entropy}, as an important evaluation metric in some real data experiment. }

    \begin{figure}[htbp]
        \centering
        \includegraphics[width=\linewidth]{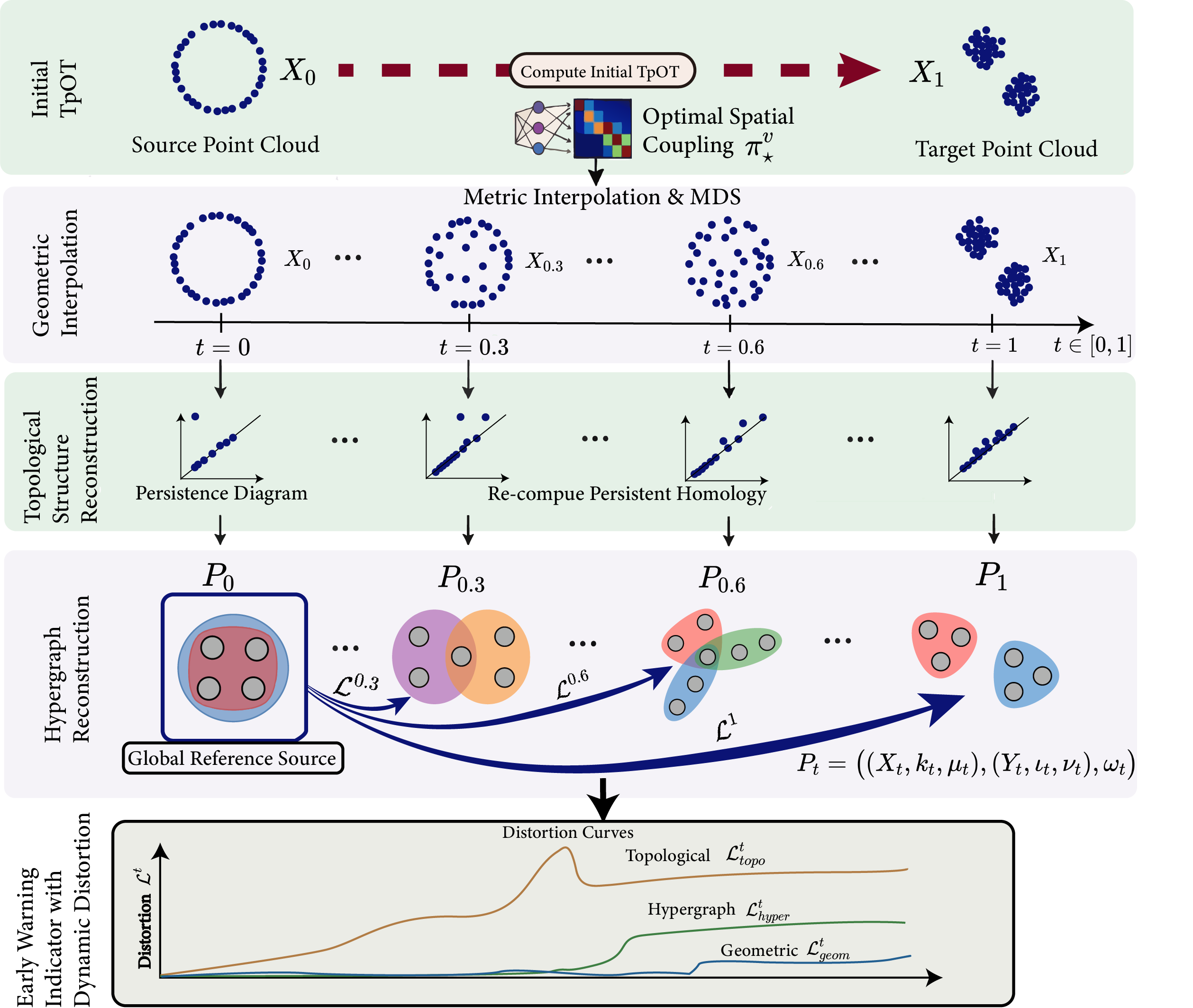}
        \caption{\revision{\textbf{Schematic overview of the hypergraph reconstruction and dynamic distortion framework.}  \textbf{Row 1 (Initial TpOT):} The optimal spatial coupling $\pi^v_\star$ is computed between the source $X_0$ and target $X_1$ point clouds via solving optimal TpOT problem. \textbf{Row 2 (Geometric Interpolation):} Metric interpolation and Multidimensional Scaling (MDS) generate intermediate spatial configurations $X_t$ along the geodesic $t \in [0,1]$. \textbf{Row 3 (Topological Structure Reconstruction):} Persistent homology is computed based on the interpolated geometric point cloud to extract authentic intermediate features. \textbf{Row 4 (Hypergraph Reconstruction:} Measure topological networks $P_t$ are assembled, where nodes represent regions and colored hulls represent topological hyperedges.  \textbf{Row 5 (Dynamic Distortion as Early Warning Indicators)}The dynamic distortions ($\mathcal{L}^t$) are rigorously computed by solving the TpOT distances between the global reference source $P_0$ and each intermediate state $P_t$. }}
        \label{fig:geodesic_interpolation}
    \end{figure}

\subsection{Measure Topological Network}\label{ssec:network_construction}

Given a point cloud \revision{i.e., a finite set of points} \(X=\{x_i\}_{i=1}^N\subset\mathbb{R}^d\), we build its associated \emph{measure topological network}\cite{zhang2025topological}
\[
P \;=\;\bigl((X,k,\mu),\;(Y,\iota,\nu),\;\omega\bigr),
\]
as defined in Section~\ref{ssec:measure_topological_network}.  Concretely:
\begin{itemize}
  \item \textbf{Geometry:} choose a symmetric kernel \(k(x,x')\) (e.g.\ Gaussian affinity or Euclidean distance) and uniform measure \(\mu(x_i)=1/N\).
  \item \textbf{Topology:} we use a powerful topological tool \textit{Ripserer.jl} to compute persistent homology in the desired dimension (e.g.\ 0D or 1D) to obtain a set of cycle representatives \(Y\).  Record each cycle’s birth–death coordinates via \(\iota:Y\to\Lambda\) and assign equal mass \(\nu(y)=1/|Y|\).
  \item \textbf{Incidence:} for each pair \((x_i,y)\), set \(\omega(x_i,y)=1\) if \(x_i\) lies on the chosen representative cycle \(y\), and \(0\) otherwise.
\end{itemize}
\revision{Analogously, for a second point cloud \(X'\), we construct the corresponding measure topological network \(P'=((X',k',\mu'),(Y',\iota',\nu'),\omega')\) following the same procedure.}

\subsection{TpOT Distance and Optimal Coupling}\label{ssec:tpot_compute}

    We then compute the TpOT distance of order \(p=2\) between \(P\) and \(P'\) (\ref{eq:tpot}) in which all geodesics in \(\mathcal{P}/\!\!\sim_w\) are convex.

    For clarity and efficient implementation, we now express each distortion in purely tensor‐ or matrix‐based summation form. \revision{We denote the cardinalities of the point clouds by \(N=|X|\) and \(N'=|X'|\), and the number of persistent homology generators (cycles) by \(M=|Y|\) and \(M'=|Y'|\). With these notations, let:} 
    \begin{align*}
    &L^{\mathrm{geom}}_{\,ii'jj'} 
    := \bigl|\,k(x_i,x_j)-k'(x'_{i'},x'_{j'})\bigr|^p,\quad
    1\le i,j\le N,\;1\le i',j'\le N',\\
    &L^{\mathrm{topo}}_{\,uv} 
    := \begin{cases}
         \|\iota(y_u)-\iota'(y'_v)\|^p,\quad \quad \quad&1\le u\le M,\;1\le v\le M',\\
         \|\iota(y_u)-\iota'(\partial_{Y'})\|^p, \quad &1\le u\le M, \; v=M'+1,\\
         \|\iota(\partial_{Y})-\iota'(y'_v)\|^p, \quad &u=M+1, \; 1\le v\le M',\\
         0, &u=M+1, v=M'+1
    \end{cases}
    \\
    &L^{\mathrm{hyper}}_{ii'\,uu'}
    :=
    \begin{cases}
    \tfrac12\,\bigl|\omega_{i,u}-\omega'_{i',u'}\bigr|^p,\quad \quad \quad
    &1\le u\le M,\;1\le u'\le M',\\[6pt]
    \tfrac12\,\bigl|\omega'_{i',u'}\bigr|^p,
    &u=M+1,\;1\le u'\le M',\\[6pt]
    \tfrac12\,\bigl|\omega_{i,u}\bigr|^p,
    &1\le u\le M,\;u'=M'+1,\\[6pt]
    0,
    &u=M+1,\;u'=M'+1.
    \end{cases}
    \end{align*}

    Let \(\Pi^v\in\mathbb{R}^{N\times N'}\) and \(\Pi^e\in\mathbb{R}^{(M+1)\times (M'+1)}\) be the optimal coupling matrices.  Then

    \begin{align}
    \label{eq:Lgeom_num}
    \mathcal{L}_{\mathrm{geom}}(\Pi^v)
    &=\sum_{i,i'=1}^{N,N'}\sum_{j,j'=1}^{N,N'} 
    L^{\mathrm{geom}}_{\,ii'jj'}\;\Pi^v_{\,ii'}\;\Pi^v_{\,jj'},\\
    \label{eq:Ltopo_num}
    \mathcal{L}_{\mathrm{topo}}(\Pi^e)
    &=\sum_{u=1}^{M+1}\sum_{v=1}^{M'+1} 
    L^{\mathrm{topo}}_{\,uv}\;\Pi^e_{\,uv},\\
    \label{eq:Lhyper_num}
    \mathcal{L}_{\mathrm{hyper}}(\Pi^v,\Pi^e)
    &=\sum_{i=1}^{N}\sum_{i'=1}^{N'}\sum_{u=1}^{M+1}\sum_{v=1}^{M'+1}
    L^{\mathrm{hyper}}_{\,ii'uu'}\;\Pi^v_{\,ii'}\;\Pi^e_{\,uu'}.
    \end{align}

    Given a 4-way tensor $L$ and a matrix $(C_{ij})_{ij}$, we define tensor-matrix multiplication\cite{chowdhury2020gromov}
    \begin{align*}
        L \otimes C := \left(\sum_{kl} L_{ijkl} C_{kl}\right)_{ij}.
    \end{align*}
    
    So the TpOT distance between two measure topological networks \(P\) and \(P'\) with tunable weights can be written as 
    \begin{align}
        L_{\mathrm{TpOT},p}(\Pi^v,\Pi^e,\alpha,\beta) &= \; \alpha\mathcal{L}_{\mathrm{geom}}(\Pi^v) \;+ \; (1-\alpha)\mathcal{L}_{\mathrm{topo}}(\Pi^e)\; +\; \beta\mathcal{L}_{\mathrm{hyper}}(\Pi^v,\Pi^e)\\
        &=\alpha \langle L^\mathrm{geom}\otimes \Pi^v,\Pi^v\rangle\;+\;(1-\alpha)\langle L^\mathrm{topo}, \Pi^e\rangle \;+\; \beta\langle L^\mathrm{hyper}\otimes \Pi^v,\Pi^e\rangle
    \end{align}

    In practice we solve the TpOT problem \eqref{eq:tpot} in its entropic‐regularised form  
    \[
    \min_{\substack{\pi^v\in\Pi(\mu,\mu')\\\pi^e\in\Pi_{\mathrm{adm}}(\nu,\nu')}}\;
    L_{\mathrm{TpOT},p}(\pi^v,\pi^e,\alpha,\beta)
    \;+\;\varepsilon_v\,\mathrm{KL}\bigl(\pi^v\mid\mu\otimes\mu'\bigr)
    \;+\;\varepsilon_e\,\mathrm{KL}\bigl(\pi^e\mid\nu\otimes\nu'\bigr).
    \]  
    Following section3.4 of \cite{zhang2025topological}, we solve the entropically regularised TpOT problem by alternately updating the two couplings while keeping the other fixed. These two Sinkhorn-style updates are utilized until convergence to \((\pi^v_\star,\pi^e_\star)\).  

    \subsection{Hypergraph Reconstruction and Dynamic Distortion}\label{ssec:geodesic_losses}
    \revision{Theoretically, the space of measure topological networks admits geodesics defined by convex combinations of their components~\cite{zhang2025topological}.}
    \revision{Given an optimal coupling}\((\pi^v_\star,\pi^e_\star)\), \revision{a theoretical} constant‐speed geodesic \(P_t\) for \(t\in[0,1]\) \revision{would be defined by linear interpolation:}
    \[
    P_t = \Bigl((X\times X',\,k_t,\,\pi^v_\star),\;((Y\times Y')\cup (Y\times \partial_{Y'})\cup (\partial_Y \times Y'),\,\iota_t,\,\pi^e_\star),\,\omega_t\Bigr),
    \]
    where linear interpolation yields
    \[
    k_t = (1-t)\,k + t\,k',\quad
    \iota_t = (1-t)\,\iota + t\,\iota',\quad
    \omega_t = (1-t)\,\omega + t\,\omega'.
    \]
    \revision{\textbf{Remark:} A direct linear interpolation of the incidence matrix \(\omega\) raises two fundamental issues: First, it could inevitably introduce fractional membership values (e.g., \(\omega_t(x,y) \in (0,1)\)), which are ill-defined in the context of binary hypergraphs. Second, and more critically, the topological structures generated by such abstract algebraic interpolation may become detached from the geometric reality of the data manifold; specifically, the interpolated cycles my fail to correspond to actual loops formed by the interpolated points.}\\

    \revision{To overcome these limitations and preserve physical validity, we propose a \textbf{hypergraph reconstruction} strategy. Instead of interpolating the parameters of measure topological networks directly, we fist interpolate the underlying geometry, next \textbf{re-compute} the topological structure and then reconstruct the measure topological network (Hypergraph). Specifically, with the optimal geometric coupling \(\pi^v_\star\), we construct the continuous trajectory as follows:

\begin{itemize}
    \item \textbf{Step 1: Geometric Interpolation:} Following the geodesic construction introduced by Han et al. \cite{han2025covariance,zhang2025topological}, we first obtain an optimal matching from the entropic regularized spatial coupling \(\pi^v_\star\). For the matched point pairs, we linearly interpolate their squared Euclidean distance matrices as \(k_t = (1-t)k + tk'\). The spatial coordinates of the intermediate point cloud \(X_t\) are then recovered by applying the Multidimensional Scaling (MDS) algorithm to \(k_t\), followed by a rigid transformation step for spatial alignment.
    \item \textbf{Step 2: Topological Structure Reconstruction:} Rather than artificially blending abstract features, we re-compute the persistent homology directly on the newly generated point cloud \(X_t\) to extract the authentic topological features (e.g., persistence barcodes) that physically exist at time \(t\).
    \item \textbf{Step 3: Measure Topological Network (Hypergraph) Reconstruction:} Using the spatial coordinates \(X_t\) and the authentically extracted topological features, we construct the intermediate measure topological network 
    \[
    P_t = \bigl((X_t, k_t, \mu_t), (Y_t, \iota_t, \nu_t), \omega_t\bigr)
    \] 
    following the pipeline outlined in Section~\ref{ssec:network_construction}.
    \item \textbf{Step 4: Early Warning Indicator with Dynamic Distortion:} Finally, the dynamic structural distortions \(\mathcal{L}^t\) are evaluated by solving the TpOT problem between the initial reference source \(P_0\) and the reconstructed intermediate network \(P_t\) at each time step \(t \in [0,1]\). That is, at each interpolation step \(t\), we measure the optimal discrepancy between the initial reference network \(P_0\) and the reconstructed hypergraph \(P_t\).
    The three dynamic distortion curves are defined as the minimized costs as follows.
\end{itemize}}

    \revision{
    Let \(L^{\mathrm{geom},t}\), \(L^{\mathrm{topo},t}\), and \(L^{\mathrm{hyper},t}\) denote the cost tensors defined between the static source \(P_0\) and the dynamic reconstruction \(P_t\). Unlike the theoretical linear interpolation where the coupling remains fixed, here we solve the optimal transport problem at each step \(t\). Let \((\pi^v_t, \pi^e_t)\) be the optimal couplings achieving the TpOT distance \(d_{\mathrm{TpOT}}(P_0, P_t)\). 
    
    \begin{align}
    \mathcal{L}_{\mathrm{geom}}^t
    &= \iint_{(X\times X_t)^2} \bigl|k(x,y) - k_t(x',y')\bigr|^p \,\mathrm{d}\pi^v_t(x,x')\,\mathrm{d}\pi^v_t(y,y') \label{eq:geom_curve}\\
    \mathcal{L}_{\mathrm{topo}}^t
    &= \int_{\bar Y\times\bar Y_t} \|\iota(y) - \iota_t(y')\|^p \,\mathrm{d}\pi^e_t(y,y')  \label{eq:topo_curve}\\
    \mathcal{L}_{\mathrm{hyper}}^t
    &= \int_{X\times X_t \times Y \times Y_t} \bigl|\omega(x,y) - \omega_t(x',y')\bigr|^p \,\mathrm{d}\pi^v_t(x,x')\,\mathrm{d}\pi^e_t(y,y') . 
    \label{eq:hyper_curve}
    \end{align}

    These curves \(\mathcal{L}^t\) quantitatively track the evolution of structural deviations.
    Specifically, \(\pi^v_t\) and \(\pi^e_t\) adaptively update the matching between the source and the evolving manifold, ensuring that the distortion reflects the true topological difference rather than artifacts of linear interpolation.

    Similarly, let the time-dependent cost tensors \(L^{\mathrm{geom},t}_{ii'jj'} ,L^{\mathrm{topo},t}_{uv} ,L^{\mathrm{hyper},t}_{ii'uv} \) be computed between the reference network \(P_0\) and the reconstructed network \(P_t\) (using the re-computed maps \(k_t\), \(\iota_t\), and \(\omega_t\)).
    Since the topological structure of \(P_t\) is re-generated at each step, the optimal correspondence may evolve.
    Let \(\Pi^v_t\) and \(\Pi^e_t\) denote the \emph{time-specific} optimal couplings obtained by solving the TpOT problem between \(P_0\) and \(P_t\).
    The three dynamic distortion curves are then calculated as the minimized transport costs:
    \begin{align}
    \mathcal{L}_{\mathrm{geom}}^t
    &= \sum_{i,i',j,j'} L^{\mathrm{geom},t}_{ii'jj'} \,\Pi^v_{t,\,ii'}\,\Pi^v_{t,\,jj'}
    = \bigl\langle L^{\mathrm{geom},t}\otimes\Pi^v_t,\,\Pi^v_t\bigr\rangle,
    \label{eq:geom_curve_discrete}\\
    \mathcal{L}_{\mathrm{topo}}^t
    &= \sum_{u,v} L^{\mathrm{topo},t}_{uv}\,\Pi^e_{t,\,uv}
    = \bigl\langle L^{\mathrm{topo},t},\,\Pi^e_t\bigr\rangle,
    \label{eq:topo_curve_discrete}\\
    \mathcal{L}_{\mathrm{hyper}}^t
    &= \sum_{i,i',u,v} L^{\mathrm{hyper},t}_{ii'uv}\,\Pi^v_{t,\,ii'}\,\Pi^e_{t,\,uv}
    = \bigl\langle L^{\mathrm{hyper},t}\otimes\Pi^e_t,\,\Pi^v_t\bigr\rangle.
    \label{eq:hyper_curve_discrete}
    \end{align}
    Figure \ref{fig:geodesic_interpolation} illustrates this reconstruction-based TpOT inetrpolation and dynamic evaluation framework.}

    These three \emph{dynamic distortion curves} \(\mathcal{L}^t\) capture the deformation of geometry, topology, and cycle incidence as \(t\) increases. 


    \subsection{Entropy as Early Warning Indicator}\label{ssec:entropy}

    To detect abrupt topological events along the geodesic, we compute two entropy metrics at each \(t\): persistence entropy and hypergraph entropy.

    First we apply the persistence entropy which is introduced in \cite{chintakunta2015entropy}. Extract the barcode \(B_t=\{[b_i^t,d_i^t]\}\) in \(\iota_t(Y,Y') \).  Let \(l_i^t=d_i^t-b_i^t\) and \(L_t=\sum_i l_i^t\).  Define the \textbf{Persistence Entropy} on geodesic as 
    \begin{align}
    \mathrm{PE}(B_t) = -\sum_i \frac{l_i^t}{L_t}\log\!\Bigl(\tfrac{l_i^t}{L_t}\Bigr).
    \end{align}
    Intuitively, entropy measures how different bars of the barcodes are in length. A barcode with uniform lengths has small entropy. Large changes in \(\mathrm{PE}(B_t)\) highlight the birth or death of significant homological features.

    In order to detect abrupt topological transitions along the TpOT geodesic, one may consider Shannon‐type entropies defined on the hypergraph.  \revision{Let \(H = (V, E)\) denote the hypergraph associated with a measure topological network, where \(V\) is the set of vertices (corresponding to the data points in \(X\)) and \(E\) is the set of hyperedges (corresponding to the persistent cycles in \(Y\)).} A classical proposal (e.g.\cite{bloch2019new}) proceeds by forming the matrix
    \[
    L(H) \;=\; I(H)\,I(H)^\mathsf{T}
    \]
    where \(I(H)\)is the incidence matrix and then computing the eigenvalues\(0 \leq\lambda_1(L(H))\leq\lambda_2(L(H))\leq\ldots\leq\lambda_m(L(H))\).  Then classicaly the Shannon entropy of hypergraph \(H\) is defined as
    \begin{align}
    \label{eq:shannon_entropy}
        S(H)=-\sum_{i=1}^m\mu_i\log_2(\mu_i),
    \end{align}
    where $\mu_i$ is defined as
    \begin{align*}
        \mu_i = \frac{\lambda_i(L(H))}{\sum_{i=1}^m \lambda_i(L(H))} = \frac{\lambda_i(L(H))}{\operatorname{Tr}(L(H))}.
    \end{align*}
    
    While this \emph{Shannon entropy} captures global connectivity patterns, it has two drawbacks:
    \begin{itemize}
      \item It requires eigen‐decomposition of an \(m\times m\) matrix (where \(m=|V|\)), which can be expensive for large hypergraphs.  
      \item It smooths over local vertex–edge redistributions, and thus may fail to react sharply to small but topologically significant changes (e.g.\ the intersection of several cycles).
    \end{itemize}

    \revision{To address these issues and formulate a mathematically rigorous measure of structural information, we adopt the information-theoretic framework for networks proposed by Dehmer (2008) \cite{dehmer2008novel,Dehmer2008InformationPI}. Dehmer's paradigm assigns a probability value to each graph element based on a local structural functional $f(\cdot)$, followed by the computation of Shannon entropy.

    Let $H = (V, E)$ be the hypergraph formed by the extracted persistent cycles, with incidence matrix $\omega \in \{0,1\}^{|V| \times |E|}$. Let $L(v) = \sum_{e \in E} \omega(v,e)$ be the degree of vertex $v$, and $S(e) = \sum_{v \in V} \omega(v,e)$ be the size of hyperedge $e$. 
    To rigorously avoid singularities such as $0 \cdot \ln(0)$ or division by zero, we restrict our analysis to the \emph{active vertex set} $V^* = \{v \in V \mid L(v) > 0\}$ and the \emph{active hyperedge set} $E^* = \{e \in E \mid S(e) > 0\}$. The total incidence of the hypergraph is $I_{total} = \sum_{v \in V^*} L(v) = \sum_{e \in E^*} S(e)$.

    }
    \revision{
    \begin{definition}[\textbf{Vertex-Perspective Entropy}]\label{def:HEV}
        Following the Dehmer framework, we define the structural functional for a vertex as its degree, $f(v) = L(v)$. Normalizing this functional over the active set yields a valid probability distribution $p(v) = \frac{L(v)}{I_{total}}$, which represents the probability that a randomly chosen incidence connection belongs to vertex $v$. The Vertex-Perspective Entropy is defined as:
        \begin{equation}
            \mathrm{HE}_V(H) = -\sum_{v \in V^*} p(v) \ln p(v).
        \end{equation}
    \end{definition}

    \begin{definition}[\textbf{Hyperedge-Perspective Entropy}]\label{def:HEE}
        Dually, defining the structural functional for a hyperedge as its size $f(e) = S(e)$ yields the probability distribution $q(e) = \frac{S(e)}{I_{total}}$. The Hyperedge-Perspective Entropy is defined as:
        \begin{equation}
            \mathrm{HE}_E(H) = -\sum_{e \in E^*} q(e) \ln q(e).
        \end{equation}
    \end{definition}

    By formulating the entropies as standard Shannon entropies over the discrete probability spaces $V^*$ and $E^*$, we obtain the following rigorous structural properties:
    }
    \revision{

    \begin{property}[\textbf{Bounds and Maximal Assumption}]\label{thm:maximal}
        The entropies are bounded by $0 \le \mathrm{HE}_V(H) \le \ln |V^*|$ and $0 \le \mathrm{HE}_E(H) \le \ln |E^*|$. 
        The maximum $\mathrm{HE}_V(H) = \ln |V^*|$ is achieved if and only if $H$ is a \emph{regular hypergraph} (i.e., $L(v)$ is constant for all $v \in V^*$). 
        Dually, $\mathrm{HE}_E(H) = \ln |E^*|$ is achieved if and only if $H$ is a \emph{uniform hypergraph} (i.e., $S(e)$ is constant for all $e \in E^*$).
    \end{property}
    Proofs of this property and subsequent theorems are provided in the Appendix \ref{app:proof}.
    
    With this property, we could normalize $\widetilde{\mathrm{HE}_V}(H) = \frac{\mathrm{HE}_V(H)}{\ln |V^*|}$, $\widetilde{\mathrm{HE}_E}(H) = \frac{\mathrm{HE}_E(H)}{\ln |E^*|}\in [0,1]$.
    }
    \revision{

    \begin{definition}[\textbf{Symmetric Hypergraph Entropy}]
        To capture both perspectives simultaneously, we introduce \begin{equation}\label{def:HEsym}
        \mathrm{HE}_{\mathrm{sym}}(G)
        =\alpha\,\widetilde{\mathrm{HE}_V}
        +(1-\alpha)\,\widetilde{\mathrm{HE}_E},
        \quad
        \alpha\in[0,1].
        \end{equation}
        By adjusting \(\alpha\), one can emphasize vertex‐level (\(\alpha\approx1\)) or hyperedge‐level (\(\alpha\approx0\)) changes, or treat both equally (\(\alpha=0.5\)).
    \end{definition}

    \begin{theorem}[\textbf{Sensitivity to Abrupt Topological Transitions}]\label{thm:abrupt_change}
        Let $\{H_t\}_{t \in \mathcal{T}}$ be a sequence of dynamic hypergraphs parameterized by $t$. Suppose at a critical parameter $t_c$, an abrupt topological transition occurs via the emergence of a new active hyperedge $e_{new}$ of size $k > 0$. Then the vertex-perspective hypergraph entropy strictly changes at $t_c$ (i.e., $\lim_{t \to t_c^-} \mathrm{HE}_V(H_t) \neq \mathrm{HE}_V(H^+)$), except possibly for a highly restrictive set of hypergraphs whose degree sequences satisfy a specific, rigid Diophantine equation.
    \end{theorem}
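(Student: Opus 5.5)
Let $H^-$ denote the hypergraph just before $t_c$ (the left limit; the hypergraph sequence is piecewise constant, so this is well defined). Let $H^+ = H^- \cup \{e_{new}\}$, where $e_{new}$ has vertex set $S \subseteq V$ with $|S| = k$. The plan is to compare $\mathrm{HE}_V(H^-)$ and $\mathrm{HE}_V(H^+)$ explicitly. Write $L(v)$ for the degrees in $H^-$, $I = I_{total}(H^-)$, and $I^+ = I + k$. In $H^+$ the degree of each $v \in S$ becomes $L(v)+1$, and all other degrees are unchanged. Some vertices of $S$ may have had $L(v)=0$ and therefore join the active set. This is harmless, because the term $x\ln x$ extends continuously to $x=0$.

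It is convenient to write the entropy as
\[
\mathrm{HE}_V(H) = \ln I_{total} - \frac{1}{I_{total}}\sum_{v\in V^*} L(v)\ln L(v),
\]
with the convention $0\ln 0 = 0$, so the sum may run over all of $V$. Setting $\Phi^- = \sum_v L(v)\ln L(v)$ and $\Phi^+ = \Phi^- + \sum_{v\in S}\bigl[(L(v)+1)\ln(L(v)+1) - L(v)\ln L(v)\bigr]$, the equality $\mathrm{HE}_V(H^-) = \mathrm{HE}_V(H^+)$ becomes
\[
\ln\frac{I+k}{I} = \frac{\Phi^+}{I+k} - \frac{\Phi^-}{I}.
\]
Multiplying by $I(I+k)$ and exponentiating, every quantity is built from integer powers of integers. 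The equality is therefore equivalent to the multiplicative identity
\[
\Bigl(\tfrac{I+k}{I}\Bigr)^{I(I+k)} = \frac{\prod_v (L^+(v))^{I L^+(v)}}{\prod_v L(v)^{(I+k)L(v)}},
\]
with $0^0 = 1$. This is the ``rigid Diophantine equation'' in the statement: an equality between two rational numbers determined entirely by the degree sequences before and after the transition.

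The theorem's claim is that equality occurs only when the degree sequences satisfy this single exceptional equation, so stating the equivalence above already gives the theorem as formulated. To make the exceptional set genuinely restrictive rather than a restatement, I would add a prime-factorization remark. By the fundamental theorem of arithmetic, equality forces each prime $p$ to have matching exponents on the two sides. For instance, a prime dividing $I+k$ but none of the $L^+(v)$ gives a contradiction. So the exceptional set is cut out by a system of linear congruence-type constraints on exponents, which is codimension-like and nongeneric. As sanity checks I would also exhibit the generic cases. If $H^-$ has a single active hyperedge and $e_{new}$ is disjoint from it, then $H^-$ is regular, $\mathrm{HE}_V(H^-) = \ln|V^*|$ by Property \ref{thm:maximal}, and the entropy strictly increases to $\ln(|V^*|+k)$. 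If $e_{new}$ lies inside a regular active set, the resulting nonuniform degrees give a strict decrease, again by the equality case of Property \ref{thm:maximal}.

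The main obstacle is that the equality genuinely can hold in some cases. For example, if $e_{new}$ covers every active vertex of a regular hypergraph, then $H^+$ is still regular on the same $V^*$ and the entropy is unchanged. So no unconditional strictness proof exists. The work lies in packaging the exceptional case cleanly as the Diophantine identity above, including its regular-covering solutions, and in arguing that it is nongeneric. I would first try the prime-exponent argument, with $p$ a prime factor of $I+k$ coprime to all the updated degrees, to show that solutions are sparse.
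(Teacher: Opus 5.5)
Your argument is the paper's. You write $\mathrm{HE}_V = \ln I - \frac{1}{I}\sum_v L(v)\ln L(v)$, equate the values before and after, multiply by $I(I+k)$ and exponentiate; after cross-multiplying and cancelling $\prod_{v\notin V_{new}}L(v)^{I L(v)}$ this is exactly \eqref{eq:diophantine}. Your observation that regular hypergraphs fully covered by $e_{new}$ genuinely solve it fits the statement's ``except possibly'' clause, and is more careful than the paper's heuristic.

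One correction to your optional prime remark. A prime $p\mid I+k$ that divides no $L^+(v)$ forces a contradiction only if also $p\nmid I$ (equivalently $p\nmid k$). Your own exceptional family shows this. Take $H^-$ with hyperedges $\{1,2,3\},\{4,5,6\}$ and $e_{new}=\{1,\dots,6\}$: then $I+k=12$, every $L^+(v)=2$, and $p=3$ divides $12$ but no $L^+(v)$. Yet the entropy stays $\ln 6$, because $3\mid I$.
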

    
    }
    \revision{

    \begin{theorem}[\textbf{Dual Sensitivity to Topological Transitions}]\label{thm:dual_change}
        The same conclusion holds for hyperedge-perspective entropy $\mathrm{HE}_E$, except for a highly restrictive zero-measure algebraic condition governed by the Fundemental Theorem of Arithmetic.
    \end{theorem}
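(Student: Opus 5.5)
The plan is to mirror the argument for Theorem~\ref{thm:abrupt_change}, now working with the hyperedge-size distribution $q$. The exceptional set will be written down explicitly as an integer equation. First I fix notation. Let $H^-$ be the hypergraph just before $t_c$, and let $H^+=H^-\cup\{e_{new}\}$. Because the hypergraphs $H_t$ are combinatorial objects that are piecewise constant in $t$, I will take $\lim_{t\to t_c^-}\mathrm{HE}_E(H_t)=\mathrm{HE}_E(H^-)$, so the claim reduces to comparing $\mathrm{HE}_E(H^-)$ with $\mathrm{HE}_E(H^+)$. Write $I=I_{total}(H^-)=\sum_{e\in E^*}S(e)$ and $\lambda=k/(I+k)\in(0,1]$. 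Adding $e_{new}$ leaves every $S(e)$ with $e\in E^*$ unchanged, adds the single size $k$, and raises the total incidence to $I+k$. The degenerate case $I=0$ (no active hyperedge before $t_c$) gives $\mathrm{HE}_E=0$ on both sides. I will list it separately as a trivial exception, or else exclude it by assumption.

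The key step is the grouping (chain-rule) identity for Shannon entropy. After the transition the distribution is $q^+(e)=(1-\lambda)q(e)$ for $e\in E^*$ and $q^+(e_{new})=\lambda$. This gives
\begin{equation*}
\mathrm{HE}_E(H^+)=(1-\lambda)\,\mathrm{HE}_E(H^-)+h(\lambda),\qquad h(\lambda)=-\lambda\ln\lambda-(1-\lambda)\ln(1-\lambda).
\end{equation*}
Hence the entropy is unchanged if and only if $\lambda\,\mathrm{HE}_E(H^-)=h(\lambda)$. Next I substitute $\mathrm{HE}_E(H^-)=\ln I-\frac1I\sum_{e}S(e)\ln S(e)$ together with the explicit value of $\lambda$, and multiply through by $I(I+k)$. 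The no-change condition then becomes a purely logarithmic identity, which after exponentiation reads
\begin{equation*}
\prod_{e\in E^*}S(e)^{\,k\,S(e)}\;(I+k)^{\,I(I+k)}\;=\;I^{\,I(I+k)}\;k^{\,Ik}.
\end{equation*}
Both sides are positive integers. This is the rigid Diophantine (multiplicative) condition that the theorem calls exceptional.

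The final step, and the one I expect to be the real obstacle, is to show that this condition is genuinely restrictive. I will use the Fundamental Theorem of Arithmetic and compare $p$-adic valuations $v_p$ for each prime $p$:
\begin{equation*}
k\sum_{e}S(e)\,v_p(S(e))+I(I+k)\,v_p(I+k)=I(I+k)\,v_p(I)+Ik\,v_p(k)\quad\text{for all primes }p.
\end{equation*}
This immediately gives a concrete sufficient condition for strict change. Suppose some prime $p$ divides $I+k$ but divides none of $I$, $k$ and the sizes $S(e)$. Then the left side is positive while the right side is zero, so the equation fails. (Since $\gcd(I,I+k)=\gcd(I,k)$, such a $p$ can be absent only when every prime factor of $I+k$ already divides $k$ or one of the $S(e)$.)

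More generally, the equation imposes one linear constraint on the exponents for every prime, all to be satisfied simultaneously by a finite integer configuration. So the solution set is a thin subset of all configurations $(\{S(e)\},k)$: it is zero-density among integer configurations and, after relaxing to real sizes, it lies in a hypersurface and has measure zero. Making the phrase ``zero-measure'' precise is the delicate part. I would state it as measure zero of the real-analytic level set $\{\lambda\,\mathrm{HE}_E=h(\lambda)\}$ in the space of size vectors (a nontrivial analytic function) and pair this with the $p$-adic obstruction for the integer case, rather than attempting to classify all solutions.
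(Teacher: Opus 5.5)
Your proposal is correct. Its core is the paper's argument: the same $H^\pm$ setup with the old hyperedge sizes untouched, the same entropy formulas, and exactly the same integer identity $(I+k)^{I(I+k)}\prod_e S(e)^{kS(e)}=I^{I(I+k)}k^{Ik}$, after which both you and the paper appeal to unique factorization. The differences lie in how you reach that identity and what you extract from it.

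The paper expands both entropies directly, mirroring the proof of Theorem~\ref{thm:abrupt_change}. Your grouping identity $\mathrm{HE}_E(H^+)=(1-\lambda)\mathrm{HE}_E(H^-)+h(\lambda)$ instead exploits the fact that the new mass sits on a disjoint atom. For $\mathrm{HE}_V$ the added incidences land on existing vertices, so no such decomposition is available there. You also correctly isolate $I=0$, which the paper's $\ln I$ silently excludes and which is a genuine no-change case.

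For the last step the paper only asserts that matching prime factorizations is ``combinatorially improbable'', whereas your valuation comparison gives a checkable criterion. It can be sharpened. Every term on the left of your valuation identity is nonnegative, and $I(I+k)v_p(I+k)\ge I(I+k)>0$. A prime $p$ dividing $I+k$ but not $I$ cannot divide $k$, so it makes the right side vanish. Hence the clause about the $S(e)$ is superfluous, and any exception needs every prime factor of $I+k$ to divide $\gcd(I,k)$.

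Better still, your own reduction settles the restrictiveness you flag as delicate, without the real-analytic relaxation (which by itself says nothing about lattice points on the level set). The function $g(\lambda)=h(\lambda)-\lambda\,\mathrm{HE}_E(H^-)$ is strictly concave with $g(0)=0$, so it has at most one zero in $(0,1)$. It has none if $\mathrm{HE}_E(H^-)=0$, since then $g=h>0$ there. Because $k\mapsto k/(I+k)$ is injective, for each fixed $H^-$ at most one new-edge size $k$ leaves $\mathrm{HE}_E$ unchanged.

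That exceptional size can actually occur. Four hyperedges of common size $s$ plus a new one of size $k=4s$ give $\lambda=\tfrac12$ and $\mathrm{HE}_E=\ln 4$ both before and after. So the exception clause in the theorem is genuinely needed.
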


    }
    \revision{

    \begin{theorem}[\textbf{Isomorphism Invariance of Topological Entropy}]\label{thm:iso}
        Let \(H_1 = (V_1, E_1)\) and \(H_2 = (V_2, E_2)\) be two hypergraphs derived from two measure topological networks. If \(H_1\) and \(H_2\) are isomorphic (i.e., there exist bijections \(\phi: V_1 \to V_2\) and \(\psi: E_1 \to E_2\) preserving the incidence relations \(\omega_1(v, e) = \omega_2(\phi(v), \psi(e))\)), then
        \[
        \mathrm{HE}_V(H_1) = \mathrm{HE}_V(H_2) \quad \text{and} \quad \mathrm{HE}_E(H_1) = \mathrm{HE}_E(H_2).
        \]
    \end{theorem}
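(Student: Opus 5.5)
The plan is to show that every quantity entering $\mathrm{HE}_V$ and $\mathrm{HE}_E$ is carried over by the isomorphism $(\phi,\psi)$. Since Shannon entropy depends only on the multiset of probability values, matching distributions will give equal entropies.

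First, I will show that degrees and sizes are transported. For $v\in V_1$, reindex the sum over $E_2$ through the bijection $\psi$ and use incidence preservation:
\[
L_2(\phi(v))=\sum_{e'\in E_2}\omega_2(\phi(v),e')=\sum_{e\in E_1}\omega_2(\phi(v),\psi(e))=\sum_{e\in E_1}\omega_1(v,e)=L_1(v).
\]
Reindexing the sum over $V_2$ through $\phi$ gives, in the same way, $S_2(\psi(e))=S_1(e)$ for every $e\in E_1$.

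Second, I will check that the active sets correspond. Since $L_1(v)>0$ if and only if $L_2(\phi(v))>0$, the map $\phi$ restricts to a bijection $V_1^*\to V_2^*$. Likewise $\psi$ restricts to a bijection $E_1^*\to E_2^*$.

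Third, the total incidences agree, by reindexing through $\phi$:
\[
I_{total}^{(2)}=\sum_{w\in V_2^*}L_2(w)=\sum_{v\in V_1^*}L_2(\phi(v))=\sum_{v\in V_1^*}L_1(v)=I_{total}^{(1)}.
\]
It follows that $p_2(\phi(v))=p_1(v)$ and $q_2(\psi(e))=q_1(e)$.

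Finally, I will substitute these into Definitions~\ref{def:HEV} and~\ref{def:HEE}. Reindexing the entropy sums via the restricted bijections yields $\mathrm{HE}_V(H_1)=\mathrm{HE}_V(H_2)$ and $\mathrm{HE}_E(H_1)=\mathrm{HE}_E(H_2)$.

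I do not expect a real obstacle. The one point requiring care is that the entropies are summed over the active sets, so the argument must confirm that the bijections preserve activity; the second step does exactly this. Degenerate cases such as an empty active set, where $I_{total}=0$, occur for both hypergraphs simultaneously, so they are handled consistently under any convention the paper adopts.
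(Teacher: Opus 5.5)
Your proposal is correct and takes the same route as the paper. It transports degrees and hyperedge sizes through $(\phi,\psi)$, deduces that the total incidence and the probability distributions agree up to relabeling, and concludes by the permutation invariance of Shannon entropy; your explicit check that $\phi$ and $\psi$ restrict to bijections $V_1^*\to V_2^*$ and $E_1^*\to E_2^*$ makes rigorous a reindexing step the paper leaves implicit.
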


    \begin{theorem}[\textbf{Algebraic Topological Upper Bound}]\label{thm:upper_bound}
        Let \(D_k\) denote the \(k\)-th persistence diagram obtained from the Vietoris–Rips filtration of the network, and let $|D_k|$ be the number of persistent generators (i.e., topological features with non-zero persistence). And suppose the hypergraph \(H\) is constructed using all persistent generators across all dimensions. Then the hyperedge-perspective entropy is strictly bounded by the topological complexity of the manifold:
        \[
        \mathrm{HE}_E(H) \leq \ln\left(\sum_{k} |D_k|\right).
        \]
    \end{theorem}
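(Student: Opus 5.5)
The plan is to reduce Theorem~\ref{thm:upper_bound} to Property~\ref{thm:maximal}, after which the only remaining task is a counting argument. By Property~\ref{thm:maximal}, $\mathrm{HE}_E(H)$ is a Shannon entropy of the distribution $q(e)=S(e)/I_{total}$ on the active hyperedge set $E^*$. Hence $\mathrm{HE}_E(H)\le \ln|E^*|$. I would recall the one-line justification: by Jensen's inequality for the concave function $\ln$,
\[
\sum_{e\in E^*} q(e)\ln\frac{1}{q(e)}\le \ln\Bigl(\sum_{e\in E^*} q(e)\cdot\frac{1}{q(e)}\Bigr)=\ln|E^*|.
\]
It therefore suffices to show $|E^*|\le \sum_k |D_k|$ and to use that $\ln$ is monotone increasing.

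For the counting step, I would recall the construction of Section~\ref{ssec:network_construction}. Each hyperedge $e\in E$ corresponds to a representative cycle $y\in Y$. In the setting of the theorem, $Y$ consists of one chosen representative for each persistent generator across all dimensions. So the assignment sending a hyperedge to its persistence pair $\iota(y)\in D_k$ is an injection $E\hookrightarrow \bigsqcup_k D_k$, and $|E|\le\sum_k|D_k|$. Since $E^*\subseteq E$, we get $|E^*|\le |E|\le \sum_k |D_k|$. Combining this with the first step gives
\[
\mathrm{HE}_E(H)\le \ln|E^*|\le \ln\Bigl(\sum_k|D_k|\Bigr).
\]

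I would also dispose of edge cases. If $E^*=\emptyset$, then $\mathrm{HE}_E$ is the empty sum $0$, and the bound is meaningful provided $\sum_k|D_k|\ge1$. The statement implicitly assumes the diagram is nonempty, which I would make explicit.

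The main obstacle is the injectivity claim: whether distinct hyperedges really correspond to distinct persistent generators. Two issues arise. First, generators with zero persistence must be excluded; this is built into the definition of $|D_k|$ as the number of features with nonzero persistence, and I would insist that $Y$ is built only from such features. Second, two distinct generators could have identical incidence vectors, or identical birth–death coordinates. This is harmless, because hyperedges are indexed by generators and not by their vertex sets; at worst the map is still injective on indices, and if duplicate hyperedges were merged, $|E|$ would only decrease.

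On the word ``strictly'': the inequality is strict exactly when either the hypergraph is not uniform on $E^*$ (the equality case of Property~\ref{thm:maximal}) or some generator yields an inactive or absent hyperedge. Otherwise equality holds. I would therefore state the result as $\le$, with equality characterized by these two conditions.
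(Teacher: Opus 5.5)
Your proposal is correct and follows the paper's proof: apply Property~\ref{thm:maximal} to get $\mathrm{HE}_E(H)\le\ln|E^*|$, then compare $|E^*|$ with $\sum_k|D_k|$. The one difference is that the paper asserts $|E^*|=\sum_k|D_k|$ ``by construction'', whereas you prove only the inequality $|E^*|\le\sum_k|D_k|$ via an injection, which is all the bound needs; your remark that the inequality need not be strict (equality for a uniform hypergraph in which every generator is active) is also accurate.
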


    }
    \revision{

    In many practical data analysis scenarios, the measure topological network is constructed explicitly focusing on a specific homological dimension $k$ (e.g., $k=1$ to analyze loops, or $k=2$ for voids). 
    \begin{corollary}[\textbf{Dimension-Specific Topological Bound}]
        If the hypergraph $H$ is constructed exclusively using the $k$-dimensional persistent generators, the hyperedge-perspective entropy is strictly bounded by the number of k-dimensional topological features :
        \begin{equation}
            \mathrm{HE}_E(H) \le \ln(|D_k|).
        \end{equation}
    \end{corollary}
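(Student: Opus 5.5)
The corollary is a direct specialization of Theorem~\ref{thm:upper_bound}, or equivalently of Property~\ref{thm:maximal}, and I would prove it in three short steps.

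\textbf{Step 1: apply the Shannon bound.} By Property~\ref{thm:maximal}, $\mathrm{HE}_E(H)\le \ln|E^*|$. This holds because $q(e)=S(e)/I_{total}$ is a probability distribution on the finite set $E^*$, and Shannon entropy on a finite set is at most the logarithm of its cardinality.

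\textbf{Step 2: count the hyperedges.} It then suffices to show $|E^*|\le |D_k|$. By the construction in Section~\ref{ssec:network_construction}, each hyperedge $e\in E$ is a chosen representative cycle $y\in Y$, with one representative per persistent generator. When $H$ is built exclusively from $k$-dimensional generators, the map $Y\to D_k$ sending a cycle to its birth--death point $\iota(y)$ is injective at the level of generators. Points of $D_k$ with multiplicity should be counted with multiplicity, so that $|D_k|$ counts generators. This gives $|E|\le |D_k|$. Since $E^*\subseteq E$, we get $|E^*|\le|E|\le|D_k|$. Monotonicity of $\ln$ then yields $\mathrm{HE}_E(H)\le\ln|E^*|\le\ln|D_k|$.

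\textbf{Alternative route and degenerate cases.} One could instead invoke Theorem~\ref{thm:upper_bound} with all $D_j$ for $j\neq k$ treated as contributing no hyperedges, so that $\sum_j |D_j|$ collapses to $|D_k|$. I prefer the direct argument above, since it avoids reinterpreting the hypothesis ``all dimensions'' in Theorem~\ref{thm:upper_bound}. Either way, the degenerate cases need a sentence. If $E^*=\emptyset$, then $\mathrm{HE}_E=0$ by the empty-sum convention. The right-hand side is then either $\ln|D_k|\ge 0$ when $|D_k|\ge1$, or it requires the convention $\ln 0=-\infty$ together with the observation that $|D_k|=0$ forces $E=\emptyset$; this case should simply be excluded or handled by convention.

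\textbf{Main obstacle.} The main difficulty is the counting step, not the entropy inequality. One must make precise that hyperedges correspond one-to-one with generators counted by $|D_k|$. Features with zero persistence are excluded from $D_k$; if the construction discarded them too, the inequality holds, and otherwise the bound must be read with $|D_k|$ counting every extracted generator. Representative cycles may also coincide as vertex sets. That causes no problem, because the hyperedges are indexed by generators, and merging duplicates could only decrease $|E|$.

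Finally, the statement says ``strictly bounded'', but what the argument proves is the non-strict inequality $\le$. Equality is possible exactly when $E^*$ has $|D_k|$ elements and the hypergraph is uniform, by Property~\ref{thm:maximal}. So I would note that the bound is sharp, and that ``strictly'' can only mean the inequality as displayed.
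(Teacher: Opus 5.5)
Your argument is correct and follows the paper's route. The corollary is the single-dimension case of Theorem~\ref{thm:upper_bound}, whose proof is exactly Property~\ref{thm:maximal} combined with identifying the active hyperedges with the persistent generators counted by $|D_k|$. The one difference is that you prove $|E^*|\le|D_k|$ via an injection, whereas the paper asserts the equality $|E^*|=|D_k|$ ``by construction''; your inequality is enough for the bound and also holds when only the top persistence pairs are kept as hyperedges, as in the fMRI experiment.
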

    
    \textbf{Remark:} 
    This corollary has significant practical implications. For instance, when tracking the structural dynamics of a 1-dimensional functional network, the maximum possible hyperedge entropy is fundamentally bottlenecked by the total number of k-dimensional persistent features $\ln(|D_1|)$. This implies that our entropy metric is not only a statistical measure of incidence, but a direct proxy for the 1-dimensional algebraic topological capacity of the system.    
    }

    Our newly proposed hypergraph entropies have following advantages 
    \begin{itemize}
      \item \emph{Computational efficiency:} each requires only \(O(|V|\cdot|E|)\) operations, avoiding costly eigen‐decompositions.
      \item \revision{\emph{Theoretical Sensitivity:} As proven in Theorem 1, the abrupt appearance or disappearance of a topological feature instantaneously forces a non-zero displacement in the probability simplex. This guarantees a mathematically rigorous entropy spike or inflection, making it highly sensitive to critical topological events.}
      \item \revision{\emph{Structural Interpretability:} The metrics provide a clear macroscopic interpretation of topological uniformity. Furthermore, their theoretical upper bounds are deeply rooted in algebraic topology: the hyperedge entropy is fundamentally bottlenecked by the topological capacity of the system (i.e., bounded by the p ersistent features, \(\mathrm{HE}_E \le \ln \beta_k\), as established in Theorem 3 and Corollary 1), serving as a direct quantitative proxy for the homological complexity of the data manifold.}
    \end{itemize}

    We compute\(\mathrm{HE}_{\mathrm{sym}}(t)\) on the interpolated hypergraph \(G_t\) along the TpOT geodesic, and use sudden deviations in these curves to flag discrete topological events.  Sudden deviations in \(\mathrm{HE}_{\mathrm{sym}}\) mark structural reorganizations of the cycle hypergraph.
    \revision{In practice, we evaluate the symmetric hypergraph entropy \(\mathrm{HE}_{\mathrm{sym}}(H_t) = \alpha \frac{\mathrm{HE}_V(H_t)}{\ln|V^*|} + (1-\alpha) \frac{\mathrm{HE}_E(H_t)}{\ln|E^*|}\) on the dynamically reconstructed hypergraph \(H_t\) at each interpolation step. We then use the sharp discontinuities and sudden deviations in these curves to detect discrete topological phase transitions and structural reorganizations within the evolving point cloud. For clarity, we provide pseudocode below for our method (Algorithm \ref{alg:overall}).} 

    \subsection{Point-Level Hypergraph Entropy Change}
\label{ssec:point-level-entropy}
    Complex hypergraph networks often contain multiple interdependent cycles that encode structural relationships at different spatial and topological scales. When such networks evolve---for instance, under temporal deformation, diffusion, or connectivity reorganization---the \emph{global} hypergraph entropy tends to average out the local transformations, thereby obscuring where the most significant structural changes occur. To resolve this limitation, we introduce a \textit{cycle-level entropy decomposition} that projects entropy down to the vertex level. This formulation enables identification of local topological variations within complex hypergraph systems.
    
\revision{
\paragraph{Incidence transport via optimal coupling.}
 We define this decomposition generally between a \emph{reference} hypergraph (with incidence matrix $\omega \in \mathbb{R}_{\ge 0}^{n \times m}$) and a \emph{target} hypergraph (with incidence matrix $\omega' \in \mathbb{R}_{\ge 0}^{n' \times m'}$). Here, $\omega[i,j]$ indicates the participation weight of vertex $i$ in the $j$-th reference cycle. Because the ordering of topological features generally differs between states, the target matrix $\omega'$ is column-aligned to the reference $\omega$ using a permutation (or assignment) matrix $A \in \{0, 1\}^{m' \times m}$:
\[
    \widehat{\omega} = \omega' A.
\]
This matrix multiplication explicitly permutes the columns of the target incidence matrix, yielding a column-aligned matrix $\widehat{\omega} \in \mathbb{R}_{\ge 0}^{n' \times m}$ where the $j$-th column directly corresponds to the $j$-th reference cycle.

\paragraph{Column normalization.}
To compute the entropy, the columns of the incidence matrices are normalized to form probability distributions over the vertices. For the reference matrix $\omega$ and the aligned target matrix $\widehat{\omega}$, we explicitly define their normalized counterparts $P$ and $\widehat{P}$ as:
\[
    P[i,j] = \frac{\omega[i,j]}{\sum_{i'=1}^{n}\omega[i',j]+\varepsilon}, 
    \qquad 
    \widehat{P}[i,j] = \frac{\widehat{\omega}[i,j]}{\sum_{i'=1}^{n'}\widehat{\omega}[i',j]+\varepsilon},
\]
where $\varepsilon$ is a sufficiently small constant to ensure numerical stability.

\paragraph{Cycle-level entropy.}
Let $p_j = P[:,j]$ denote the normalized vertex distribution of the $j$-th reference cycle, and $\widehat{p}_j = \widehat{P}[:,j]$ denote the corresponding distribution from the aligned target matrix. We define the cycle-level entropy for the reference and target states, respectively, as:
\begin{equation}
    H_j = -\frac{1}{\log n} \sum_{i=1}^{n} p_{ij}\,\log p_{ij}, 
    \qquad 
    \widehat{H}_j = -\frac{1}{\log n'} \sum_{i=1}^{n'} \widehat{p}_{ij}\,\log \widehat{p}_{ij},
    \label{eq:cycle_entropy}
\end{equation}
which measures how spatially diffuse (large entropy) or concentrated (small entropy) the participation is within that specific topological cycle. 

\paragraph{Entropy difference and propagation.}
For each reference cycle $j$, the change in structural entropy between the two states is defined as:
\begin{equation}
    \Delta H_j = \widehat{H}_j - H_j.
\end{equation}
To localize this structural variation onto the target vertex set, the absolute entropy change $|\Delta H_j|$ is propagated back to the target vertices according to their membership weights in the aligned cycles:
\begin{equation}
    s_i = \sum_{j=1}^{m} \widehat{P}[i,j]\, |\Delta H_j|.
\label{eq:point_score}
\end{equation}
By defining the column vector $s = [s_1, \dots, s_{n'}]^\top$ and the entropy difference vector $\Delta H = [\Delta H_1, \dots, \Delta H_m]^\top$, this back-projection can be compactly written in matrix form as $s = \widehat{P}\,|\Delta H|$. The vector $s$ forms a \emph{vertex-level hypergraph-entropy field}, which highlights localized regions undergoing the strongest structural transformations. 

\paragraph{Interpretation.}
Equations~\eqref{eq:cycle_entropy}–\eqref{eq:point_score} transform the complex structural comparison of high-dimensional hypergraphs into an interpretable scalar field over the vertex domain. Visualizing the field $s$ on the spatial point cloud reveals localized topological deformations that global entropy measures inherently average out.
}

\begin{algorithm}[htbp]
\caption{\revision{Early warning detection via dynamic distortions and Entropy}}\label{alg:overall}
\begin{algorithmic}[1]

\revision{
\Require Time-series point clouds $\{X^{(i)}\}_{i=1}^T$, distance kernel $k$.
\Require TpOT trade-off weights $(\alpha,\beta)$, regularization weights $(\varepsilon_v, \varepsilon_e)$, and symmetric entropy weight $\gamma$.
\Require Local geodesic resolution $L$, forming a uniform partition of the interval $[0,1]$ denoted by $\{\tau_\ell\}_{\ell=1}^L$.
\Ensure A continuous global trajectory of structural evaluations: distortion curves $\mathcal{L}_{\mathrm{geom}}(\tau^*), \mathcal{L}_{\mathrm{topo}}(\tau^*), \mathcal{L}_{\mathrm{hyper}}(\tau^*)$ and entropy curves $\mathrm{PE}(\tau^*), \mathrm{HE}_{\mathrm{sym}}(\tau^*)$ parameterized by the global continuous timeline $\tau^* \in [0, 1]$.

\State \textbf{Initialization:} 
\State Construct the global reference network $P^{(1)} = \bigl((X^{(1)},k^{(1)},\mu^{(1)}),\,(Y^{(1)},\iota^{(1)},\nu_1),\,\omega^{(1)}\bigr)$.
\Statex

\For{$i = 1$ \textbf{to} $T-1$}
    \State Construct networks $P^{(i)}$ and $P^{(i+1)}$ via persistent homology.
    \State Compute TpOT between $P^{(i)}$ and $P^{(i+1)}$ using weights $(\alpha, \beta)$ and regularization weights $(\varepsilon_v, \varepsilon_e)$ to obtain the optimal spatial coupling $\pi^v_\star$.
    \Statex
    
    \For{$\ell = 1$ \textbf{to} $L$}
        \State $\tau \leftarrow \tau_\ell$ \quad \emph{// Local geodesic parameter $\tau \in [0,1]$}
        \State $\tau^* \leftarrow (i + \tau - 1)/(T-1)$ \quad \emph{// Mapping to global continuous timeline $\tau^* \in [0, 1]$}
        
        \State \emph{// Step 1: Geometric Displacement Interpolation}
        \State Compute interpolated spatial positions based on matched pairs $(x, x') \sim \pi^v_\star$:
        \State \quad $X_{\tau^*} \leftarrow \operatorname{Embed}\Bigl( \bigl\{ k_{\tau^*} = (1-\tau)k^{(i)} + \tau k^{(i+1)} \bigm| \pi^v_\star > 0 \bigr\} \Bigr)$
        
        \State \emph{// Step 2: Topological Reconstruction}
        \State Re-compute persistent homology strictly on $X_{\tau^*}$ to construct the intermediate network $P_{\tau^*}$.

        \State \emph{// Step 3: Dynamic Curve Evaluation (Relative to Global Reference $P^{(1)}$)}
        \State Solve TpOT between $P^{(1)}$ and $P_{\tau^*}$ using weights $(\alpha, \beta, \varepsilon_v, \varepsilon_e)$ to obtain evaluation couplings $(\Pi^v_{\tau^*}, \Pi^e_{\tau^*})$.
        \State Obtain dynamic distortion curves by computing the tensor inner products:
        \State \quad $\mathcal{L}_{\mathrm{geom}}(\tau^*) = \bigl\langle L_{\mathrm{geom}}^{\tau^*},\,\Pi^v_{\tau^*}\otimes\Pi^v_{\tau^*}\bigr\rangle$
        \State \quad $\mathcal{L}_{\mathrm{topo}}(\tau^*) = \bigl\langle L_{\mathrm{topo}}^{\tau^*},\,\Pi^e_{\tau^*}\bigr\rangle$
        \State \quad $\mathcal{L}_{\mathrm{hyper}}(\tau^*) = \bigl\langle L_{\mathrm{hyper}}^{\tau^*},\,\Pi^v_{\tau^*}\otimes\Pi^e_{\tau^*}\bigr\rangle$
        
        \State Obtain structural entropy curves from the active topology of $P_{\tau^*}$:
        \State \quad $\mathrm{PE}(\tau^*) = -\sum \frac{l_j}{L} \log \frac{l_j}{L} \quad$ \emph{// from barcode lengths of $Y_{\tau^*}$}
        
        \State \quad $\mathrm{HE}_{\mathrm{sym}}(\tau^*) \leftarrow \mathrm{HE}_{\mathrm{sym}}(P_{\tau^*}; \gamma)$ \emph{// Evaluated with entropy weight $\gamma$}
        \State \quad Compute $\mathrm{HE}_V(\tau^*)$ and $\mathrm{HE}_E(\tau^*)$ identically following definition equations.
    \EndFor
\EndFor
}
\end{algorithmic}
\end{algorithm}

    This completes the detailed description of our method.  In Section~\ref{sec:experiments} we validate its effectiveness on synthetic and real‐world datasets.

    \section{Experiments}\label{sec:experiments}

    We evaluate our method \revision{on four distinct settings: two synthetic phenomenological bifurcation models, a high-dimensional biological aggregation model (D'Orsogna), and a real-world longitudinal fMRI dataset.} In each case, we report: (1) the three dynamic distortion curves \revision{($\mathcal{L}_{\mathrm{geom}}$, $\mathcal{L}_{\mathrm{topo}}$, $\mathcal{L}_{\mathrm{hyper}}$), which track the hierarchical structural evolution,} and (2) the entropy indicators, namely Persistence Entropy (PE), Symmetric Hypergraph Entropy ($\mathrm{HE}_{\mathrm{sym}}$), Vertex-Perspective Entropy($\mathrm{HE}_V$) and Hyperedge-Perspective Entropy($\mathrm{HE}_E$). These entropy measures provide complementary, parallel characterizations of topological evolution: PE captures variations in the persistence-diagram domain, while $\mathrm{HE}_{\mathrm{sym}},\mathrm{HE}_V,\mathrm{HE}_E$ quantify changes in the higher-order hypergraph incidence structure. In addition, the point-level hypergraph-entropy field is used to capture local vertices that contribute most to the observed structural reorganization in the real fMRI data.

    \revision{
    Across all experiments, the entropic TpOT problem is consistently optimized using the Sinkhorn algorithm. We configure the optimization with trade-off parameters $\alpha = 0.5$ and $\beta = 1.0$, along with entropic regularization weights $\varepsilon_s = 0.003$ for the spatial coupling and $\varepsilon_f = 0.01$ for the feature coupling. The symmetric entropy weight is set to $\gamma = 0.5$.
    }

    \revision{\subsection{Experiment 1: Topological Phase Transition in Stochastic Oscillators}

    \textbf{Data Generation.} 
    To evaluate the proposed dynamic TpOT and structural entropy framework on systems exhibiting complex topological phase transitions, we generate a synthetic dataset based on the stochastic Rayleigh-Van der Pol (RVP) oscillator \cite{tanweer2024topological}. Forced by additive white Gaussian noise, the stationary joint probability density function (PDF) of the RVP oscillator's state space $(x_1, x_2)$ is proportional to the exponential of its potential energy:
    \begin{equation}
        p(x_1, x_2) \propto \exp\left( -\frac{V(x_1, x_2)}{T} \right), \quad \text{where } V(x_1, x_2) = \frac{1}{2}(x_1^2+x_2^2)^2 + h(x_1^2+x_2^2),
    \end{equation}
    where $h$ is the critical bifurcation parameter and $T$ controls the effective noise intensity of the system. 

    As theoretically established in stochastic dynamical systems \cite{tanweer2024topological}, this oscillator undergoes a phenomenological bifurcation (P-bifurcation) exactly at $h=0$. For $h < 0$, the system exhibits limit-cycle oscillations (LCO), and its PDF forms a crater-like geometry. Topologically, this corresponds to a prominent 1-dimensional persistent loop (i.e., a Betti-1 feature). As $h$ increases past $0$, the system shifts to a monostable state, and the geometry structurally collapses into a single dense 0-dimensional connected component.

    To replicate a real-world discrete data acquisition scenario, we generated a time-varying sequence of point clouds by sampling from this analytical PDF. Since direct sampling from this unnormalized distribution is non-trivial, we employed the Metropolis-Hastings Markov Chain Monte Carlo (MCMC) algorithm. We simulated the dynamical evolution by discretizing the bifurcation parameter $h \in [-1, 1]$ into $51$ uniformly spaced snapshots. For each snapshot, we set the effective noise $T=0.001$ and extracted $N=200$ samples. 

     This rigorous procedure yields a dynamic point cloud sequence $\{X^{(i)}\}_{i=1}^{51}$ that accurately captures both the continuous geometric deformation and the abrupt topological phase transition from a ring to a single cluster, serving as the ground truth for our dynamic evaluation framework. A visualization of the sampled point clouds is showed in the top row of figure \ref{fig:rvp_evolution}.}

     \begin{figure}[H]
        \centering
        \includegraphics[width=\linewidth]{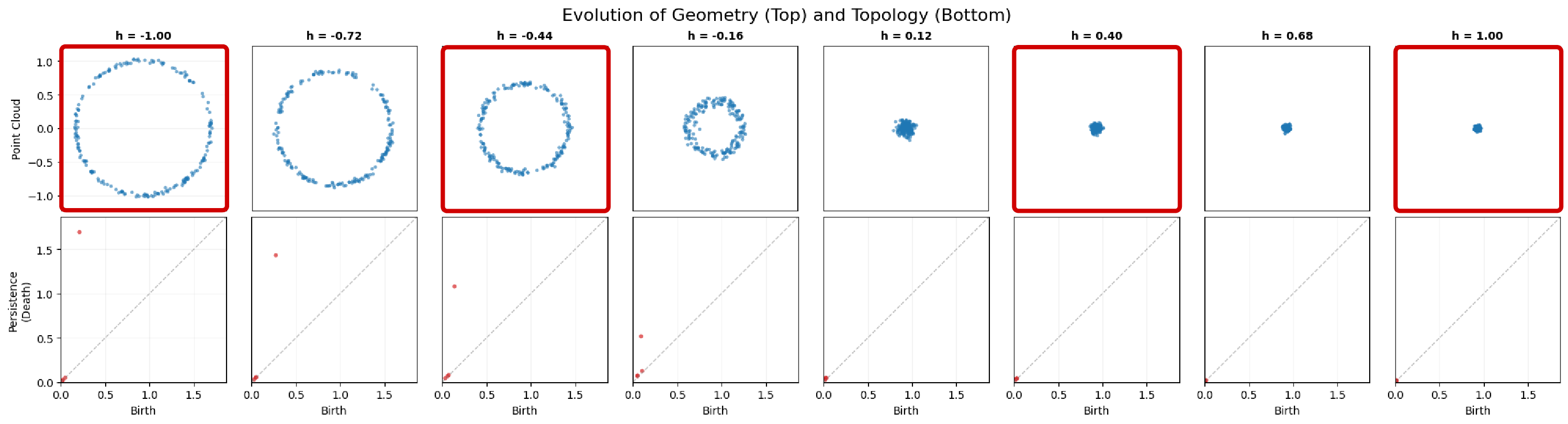} 
        \caption{\revision{Ground truth evolution of the stochastic RVP oscillator. (Top) Scatter plots of the state space showing the transition from a limit cycle to a monostable point. \textbf{Four sparse snapshots} are chosen as our training density samples for topological optimal transport task (illustrated in \textbf{red boxes}). (Bottom) The corresponding persistence diagrams tracking the birth and death of the 1-dimensional homological feature.}}
        \label{fig:rvp_evolution}
    \end{figure}
    
    \begin{figure}[H]
        \centering
        \includegraphics[width=0.8\linewidth]{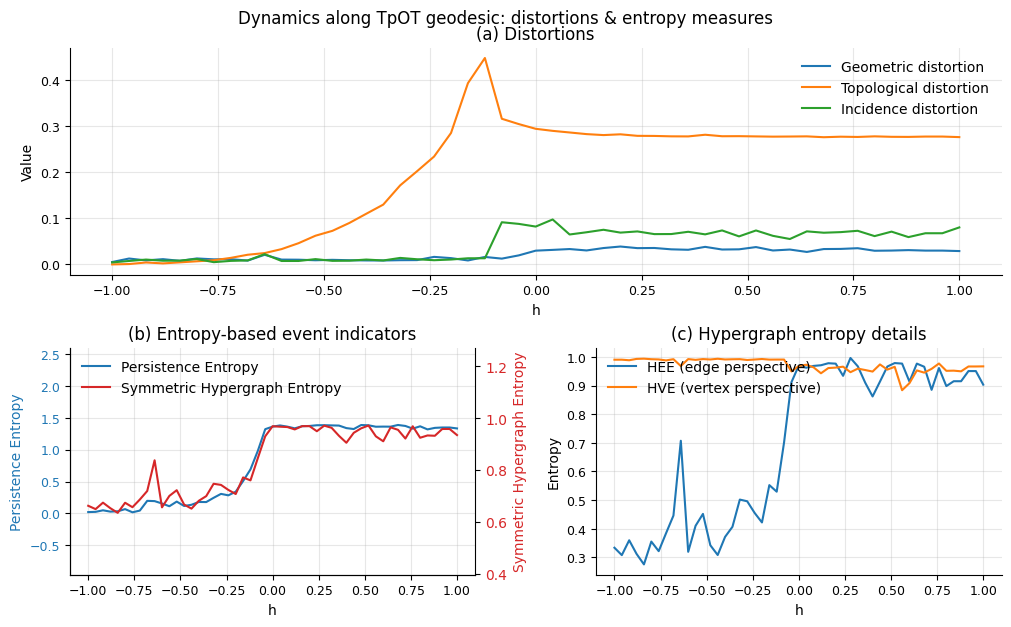} 
         \caption{\revision{Baseline(ground truth) sequential evaluation directly computed between the reference state ($h=-1$) and subsequent empirical snapshots. (a) The topological distortion peaks and flattens exactly as the limit cycle collapses. (b) Both persistence entropy and the proposed symmetric hypergraph entropy exhibit a discontinuous jump at the critical point $h=0$. (c) The jump in symmetric entropy is primarily driven by the hyperedge-perspective component ($\mathrm{HE}_E$).To facilitate a direct visual comparison, both perspective entropies are normalized by their respective theoretical upper bounds.}}
        \label{fig:rvp_distortions}
    \end{figure}

     \revision{
     \paragraph{Baseline Sequential Evaluation}

    Before evaluating our proposed geodesic interpolation method, we first apply our framework directly to the fully sampled empirical sequence over the parameter range $h \in [-1, 1]$ to establish a ground truth baseline as presented in figure \ref{fig:rvp_distortions}. As the bifurcation parameter $h$ increases, the system strictly follows the theoretical dynamics of the RVP oscillator: the crater-like limit cycle collapses into a dense, monostable cluster at the critical point $h=0$. To quantify this P-bifurcation, we compute the structural distortions relative to the global reference state $h=-1$.

    The distortion dynamics show a multi-scale temporal decoupling, corresponding to a 'macro-meso-micro' structural relaxation sequence. 
    At the \textbf{macro-scale}, the topological distortion ($\mathcal{L}_{\mathrm{topo}}$) peaks drastically just before $h=0$ as the global limit cycle contracts, incurring a substantial Wasserstein penalty. Once the macroscopic loop vanishes, $\mathcal{L}_{\mathrm{topo}}$
    flattens into a constant plateau, as the reference loop can only be matched to diagonal features. 
    Subsequently, at the \textbf{meso-scale}, the incidence distortion ($\mathcal{L}_{\mathrm{hyper}}$) peaks and reorganizes. The membership incidence matrix must dissolve and reorganize to accommodate the collapsed topology. Finally, at the \textbf{micro-scale}, the geometric distortion ($\mathcal{L}_{\mathrm{geom}}$) settles last, reflecting the physical diffusion time required for individual stochastic particles to cluster at the new local density peak. Accompanying this multi-scale cascade, the structural entropies (Persistence Entropy and the proposed $\mathrm{HE}_{\mathrm{sym}}$) exhibit a surge near the topological tipping point (h=0). This behavior suggests their potential utility as indicators of the phase transition without relying on predefined thresholds.
    }

    \begin{figure}[htbp]
        \centering
        \begin{subfigure}[b]{0.8\textwidth}
            \centering
            \includegraphics[width=\textwidth]{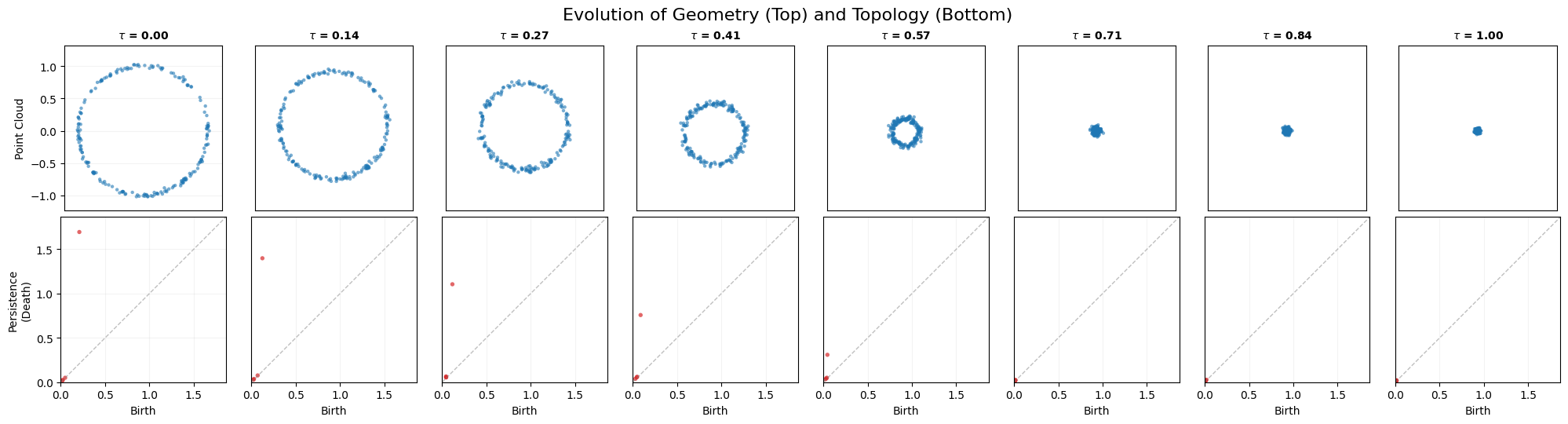}
            \caption{\revision{Reconstructed spatial geometry and corresponding persistence diagrams}}
            \label{fig:interp_a}
        \end{subfigure}
        \hfill 
        \begin{subfigure}[b]{0.8\textwidth}
            \centering
            \includegraphics[width=\textwidth]{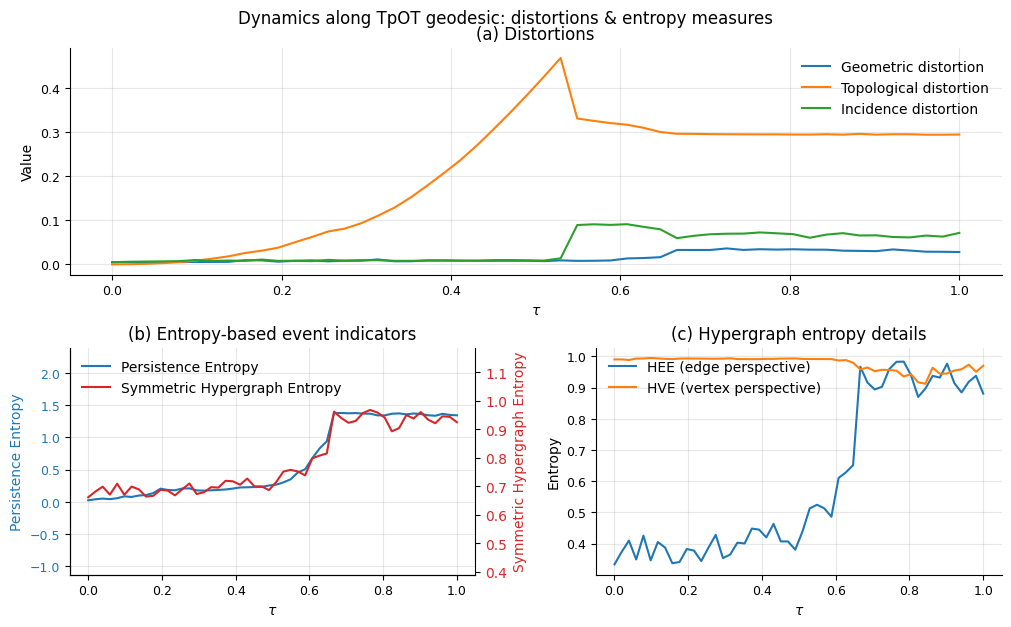}
            \caption{\revision{Reconstructed structural dynamics}}
            \label{fig:interp_b}
        \end{subfigure}
        
        \caption{\revision{\textbf{Experimental validation via TpOT geodesic interpolation.} We subsampled the empirical dataset into merely four equidistant keyframes and reconstructed the continuous topological evolution parameterized by $\tau \in [0,1]$. \textbf{(a)} The MDS-based isometric embedding interpolates the intermediate spatial geometries between the sparse keyframes, approximating the collapse of the limit cycle. \textbf{(b)} The dynamic evaluation along the reconstructed geodesic reproduces the hierarchical macro-meso-micro distortion sequence and the abrupt entropic jumps at the critical transition point, closely aligning with the unobserved ground truth dynamics.}}
        \label{fig:ex1_interpolation_validation}
    \end{figure}

    \revision{
    \paragraph{Dynamic Reconstruction of Topological Phase Transitions.}
    To demonstrate the predictive power of our methodology on temporally sparse observations, we artificially subsampled the dataset into merely four equidistant keyframes across the parameter space (simulating a low-resolution data acquisition scenario). We then applied our reconstruction-based TpOT interpolation framework (Algorithm \ref{alg:overall}) to generate the continuous trajectory parameterized by $\tau \in [0,1]$. The results are illustrated in figure \ref{fig:ex1_interpolation_validation}.

    The dynamic evaluation along the reconstructed geodesic trajectory approximates the unobserved P-bifurcation, closely aligning with the unobserved ground truth dynamics. As visualized in the interpolation results, the generated intermediate states $P_\tau$ capture both the geometric collapse and the progressive deterioration of the Betti-1 barcode. More importantly, the reconstructed distortion and entropy curves along the interpolation parameter $\tau$ closely aligning with the unobserved ground truth dynamics previously observed along the true parameter $h$. 

    Our method successfully reproduces the abrupt entropic jumps at the critical transition phase, strictly verifying the theoretical guarantees of Theorem 1(2) on interpolated data. Furthermore, the generated trajectory explicitly preserves the macro-meso-micro temporal decoupling: the interpolated $\mathcal{L}_{\mathrm{topo}}$ curve peaks and flattens well before the stabilization of the geometric component $\mathcal{L}_{\mathrm{geom}}$. This confirms that our interpolation strategy does not merely blend coordinates linearly, but rigorously reconstructs the authentic, multi-scale process of the underlying stochastic dynamical system from highly sparse observations.
     }

    \revision{
    \subsection{Experiment 2: Dimension-Specificity and Negative Control in a Double-Well Potential}
    
    \paragraph{Data Generation and Physical Model.} 
   
    \begin{figure}[H]
        \centering
        \includegraphics[width=\linewidth]{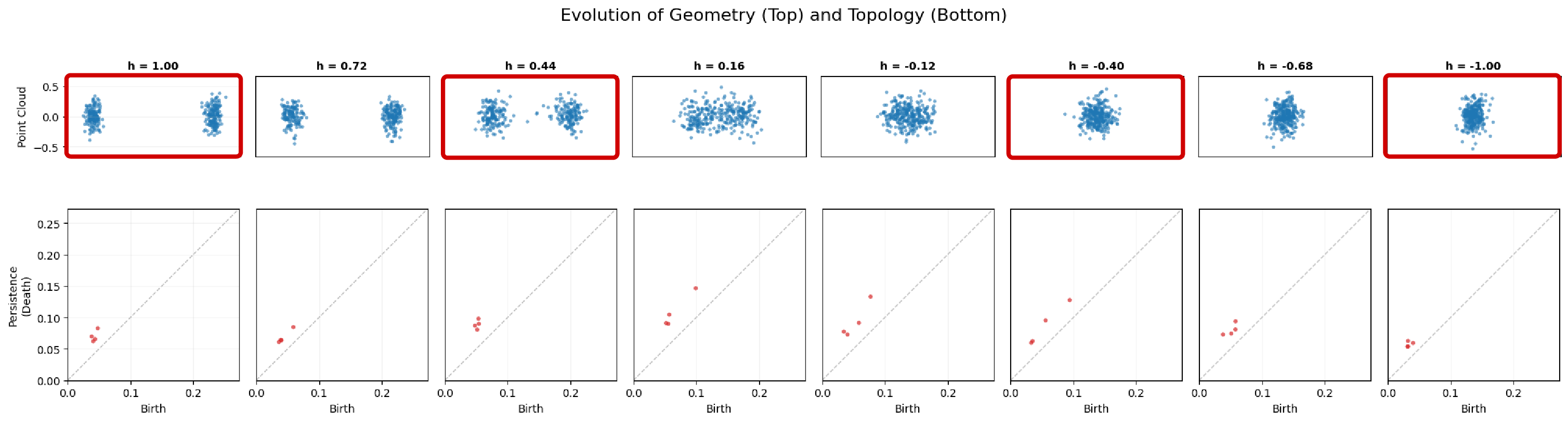} 
        \caption{\revision{Phenomenological bifurcation in the double-well potential model. (Top) Scatter plots of the state space showing the transition from a bistable regime(two distinct clusters) to a monostable regime(a single fused cluster). \textbf{Four sparse snapshots} are chosen as our training density samples for topological optimal transport task (illustrated in \textbf{red boxes}). (Bottom) The corresponding persistence diagrams tracking the birth and death of the 1-dimensional
    homological feature. }}
        \label{fig:double_well_scatter}
    \end{figure}
    
    To further rigorously evaluate our framework, particularly its dimension-specificity and the decoupling of its distortion components, we construct a second synthetic dataset inspired by the phenomenological bifurcations discussed in Appendix A of Tanweer et al. \cite{tanweer2024topological}. We simulate a stochastic system governed by a parameterized double-well potential:
    \begin{equation}
        V(x_1, x_2; h) = (x_1^2 - h)^2 + x_2^2,
    \end{equation}
    where the state space is sampled using a Metropolis-Hastings MCMC sampler with a generalized noise temperature $T=0.04$. The parameter $h$ decreases from $1.00$ to $-1.00$ over $51$ uniform snapshots. 
    
    As visualized in the scatter plots of Figure \ref{fig:double_well_scatter}, when $h > 0$, the system is in a \emph{bistable regime}, forming two distinct connected components ($\beta_0 = 2$). As $h \le 0$, the two wells merge into a single global minimum, and the point cloud topologically fuses into a single \emph{monostable} cluster ($\beta_0 = 1$). Unlike the RVP oscillator in Experiment 1, this fusion process strictly involves a 0-dimensional topological transition, completely devoid of any macroscopic 1-dimensional homological features (i.e., no authentic loops or $\beta_1$ generators are created or destroyed).

    \paragraph{Baseline Evaluation: Specificity and Dimension-Selectivity.} 
    To demonstrate the targeted selectivity of our proposed metrics, we explicitly constrained the persistent homology feature extraction exclusively to 1-dimensional homology ($H_1$) while evaluating this $\beta_0$-driven dataset. The fully separated state at $h=1.00$ serves as the global reference $P^{(1)}$.
    
    The baseline dynamics (computed directly on the full sequence)  support the specificity of our framework, as shown in the ground truth curves of Figure \ref{fig:double_well_curves_gt}. Because the two distinct probability masses physically migrate and converge toward the center over time, the pure geometric distortion ($\mathcal{L}_{\mathrm{geom}}$) exhibits a continuous, monotonic rise, successfully capturing the macroscopic spatial fusion. 
    
    However, since our topological hypergraph was strictly configured to monitor $H_1$ features, it acts as a precise theoretical filter. Because no macroscopic loops exist during the cluster merging, the $H_1$-based topological distortion ($\mathcal{L}_{\mathrm{topo}}$) and incidence distortion ($\mathcal{L}_{\mathrm{hyper}}$) remain completely suppressed at near-zero levels throughout the entire bifurcation interval. Consequently, the structural entropies (Persistence Entropy and the proposed $\mathrm{HE}_{\mathrm{sym}}$) do not exhibit the stark step-function discontinuities seen in Experiment 1; instead, they merely fluctuate stably around baseline noise levels. This serves as a powerful \emph{negative control}, proving that our entropy indicators are not spuriously triggered by mere spatial displacement, but are strictly sensitive only to the designated algebraic topological dimensions.
    
    \begin{figure}
        \centering
        \includegraphics[width=0.75\linewidth]{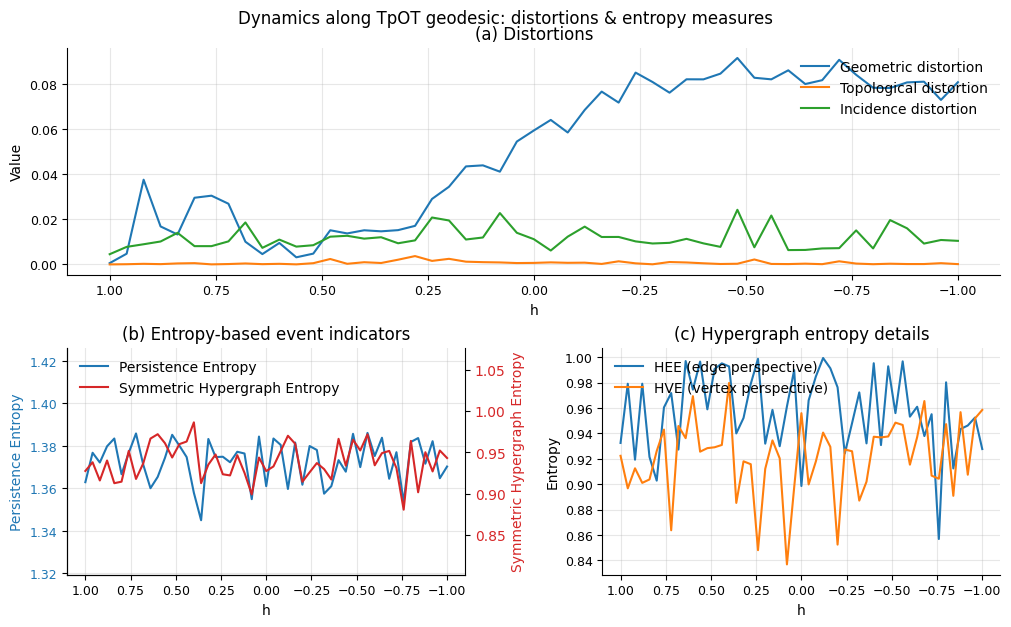}
        \caption{\revision{Ground truth baseline dynamics over parameter $h$. Because the system only undergoes a 0-dimensional transition, the 1-dimensional structural metrics ($\mathcal{L}_{\mathrm{topo}}$, $\mathcal{L}_{\mathrm{hyper}}$, and Entropies) remain completely suppressed, while the geometric distortion $\mathcal{L}_{\mathrm{geom}}$ rises to capture the spatial convergence.}}
        \label{fig:double_well_curves_gt}
    \end{figure}
    
    \paragraph{Experimental Validation of Dimension-Selective Tracking.} 
    We then subsampled this trajectory into four highly sparse keyframes and applied our reconstruction-based TpOT interpolation (Algorithm \ref{alg:overall}) over $\tau \in [0,1]$. 
    
    The interpolated trajectory captures both the spatial dynamics and the dimensional decoupling. As demonstrated in Figure \ref{fig:double_well_curves}(a), the dynamic curves computed along the geodesic $\tau$ closely aligning with the ground truth: the interpolated geometric curve $\mathcal{L}_{\mathrm{geom}}$ rises to capture the spatial fusion, while the $H_1$-specific topological components and entropy indicators remain correctly invariant. This confirms that our interpolation framework reliably reconstructs both the presence and the \emph{absence} of topological phenomena, preserving the strict decoupling between micro-scale geometric diffusion and macro-scale homological persistence across arbitrarily sparse temporal observations.

    \begin{figure}[H]
        \centering
           \begin{subfigure}[b]{0.8\textwidth}
            \centering
            \includegraphics[width=\textwidth]{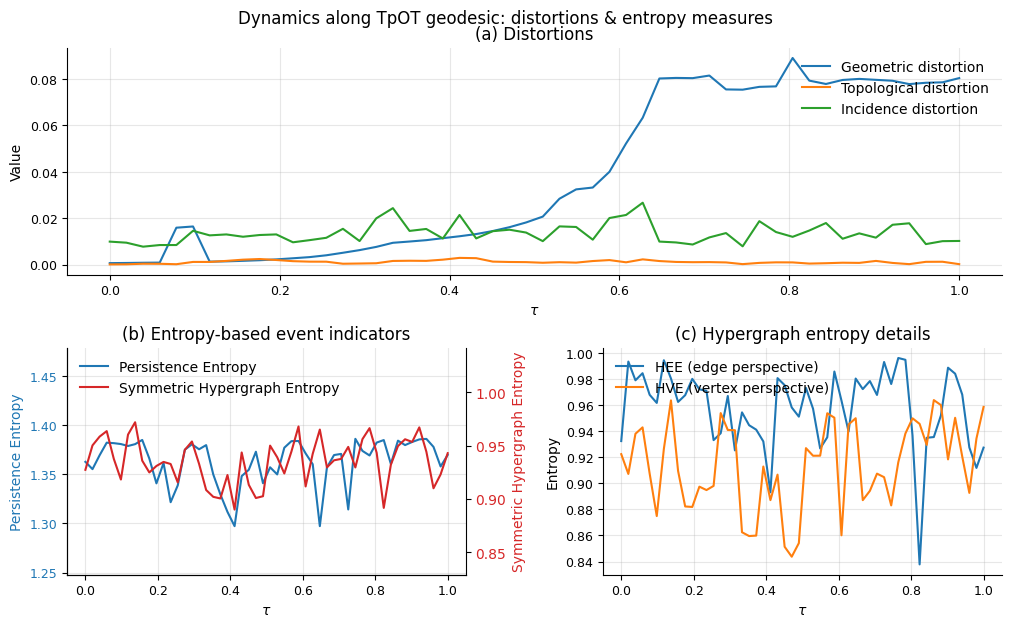}
            \caption{\revision{Reconstructed structural dynamics}}
            \label{fig:ex2_a}
        \end{subfigure}
        \hfill 
        \begin{subfigure}[b]{0.8\textwidth}
            \centering
            \includegraphics[width=\textwidth]{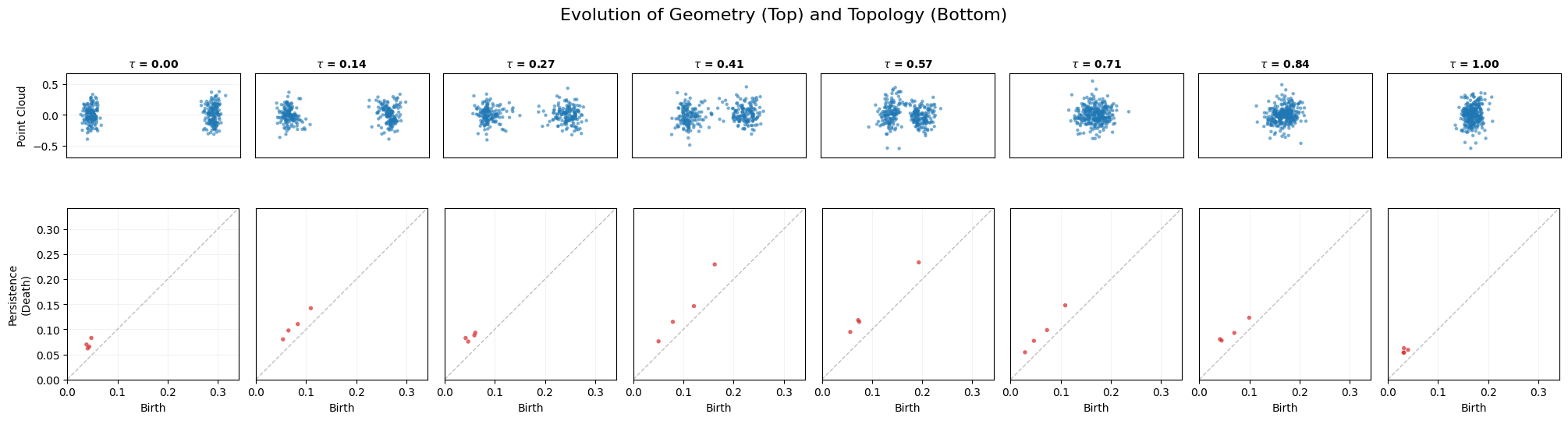}
            \caption{\revision{Reconstructed spatial geometry and corresponding persistence diagrams}}
            \label{fig:ex2_b}
        \end{subfigure}
        \caption{\revision{\textbf{Dimension-specificity and negative control evaluation.} (a)Dynamic reconstruction via geodesic interpolation from only four keyframes successfully reproduces this exact dimension-specific decoupling. (b)The MDS-based isometric embedding and corresponding 1-D persistence diagram.}}
        \label{fig:double_well_curves}
    \end{figure}
    }

    \revision{
    \subsection{Experiment 3: Self-Organization and Dimensionality in Biological Aggregations}
    
    \paragraph{Data Generation and Physical Model.} 
    To demonstrate the capability of our framework in analyzing higher-dimensional, real-world biological phenomena, we turn to the well-known D'Orsogna model of biological aggregations (e.g., bird flocks and fish schools)\cite{chuang2007state,d2006self,levine2000self,topaz2015topological}. We utilized the publicly available simulation dataset from Topaz's study, which tracks the complex topological self-organization of $N=500$ self-propelled interacting particles.
    
    Unlike the previous purely spatial models, the state of this system must be described in a 4-dimensional phase space $(x, y, v_x, v_y)$, as the particles' orientations and velocities intrinsically dictate the collective topological state (e.g., distinguishing a single mill from a double mill). The agents obey Newtonian dynamics driven by self-propulsion, friction, and a pairwise attractive-repulsive interaction potential. 
    Over time, the system spontaneously self-organizes from a relatively disorganized, disk-like swarm into a highly structured ``mill'' or vortex state, characterized by particles rotating around a hollow core. In the 4-dimensional phase space, the emergence of this hollow core and the rotational flow corresponds to the birth of prominent 1-dimensional homological features ($H_1$ topological circles). We extracted a uniformly spaced sequence of 61 snapshots from $T=1.00$ to $T=60.00$ to serve as our empirical sequence.
    }
    \revision{
    
    \begin{figure}[H]
        \centering
        \includegraphics[width=\linewidth]{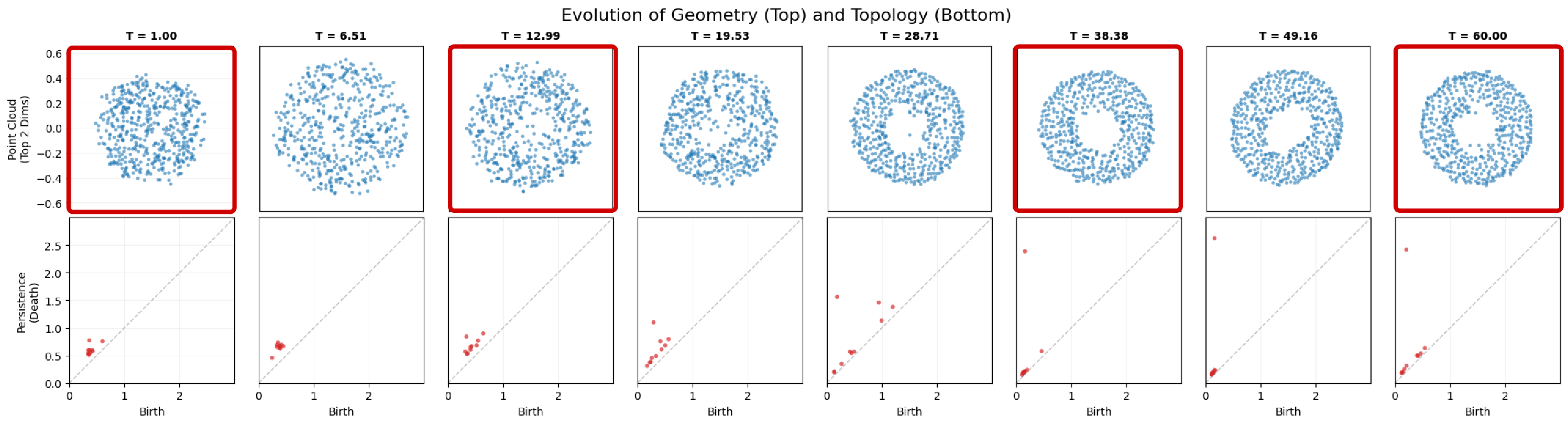}
        \caption{\revision{Ground truth evolution of the D'Orsogna biological aggregation model. (Top) The 2-dimensional spatial projection $(x,y)$ of the 4-dimensional phase space shows particles self-organizing from a disorganized state into a mill with a hollow core. \textbf{Four sparse snapshots} are chosen as our training density samples for topological optimal transport task (illustrated in \textbf{red boxes}). (Bottom) The corresponding 1-dimensional persistence diagrams track the emergence of a robust Betti-1 feature representing the rotational vortex.}}
        \label{fig:dorsogna_gt_scatter}
    \end{figure}
    
    \paragraph{Baseline Evaluation: Self-Organization and Entropy Drop.} 
    We first evaluate the baseline dynamics computed directly on the full 61-frame sequence, using the initial unstructured state at $T=1.00$ as the global reference $P^{(1)}$. As visualized in Figure \ref{fig:dorsogna_gt_scatter}, the system gradually develops a prominent persistent loop in $H_1$.
    
    Quantitatively, the distortion curves in Figure \ref{fig:dorsogna_curves}(a) effectively capture the multi-scale hierarchical relaxation of this self-organization. At the macro-scale, the topological distortion ($\mathcal{L}_{\mathrm{topo}}$) rises substantially as the initial trivial topology develops a robust Betti-1 vortex. At the meso- and micro-scales, the incidence ($\mathcal{L}_{\mathrm{hyper}}$) and geometric ($\mathcal{L}_{\mathrm{geom}}$) distortions track the continuous physical rearrangement of particles entering the annular flow.
    }
    \revision{
    The structural entropies correlate with the system’s dynamical shifts, suggesting their utility as \emph{early-warning indicators} for self-organization. As shown in Figure \ref{fig:dorsogna_curves}(b) and (c), unlike the topological distortion ($\mathcal{L}_{\mathrm{topo}}$) which exhibits a continuous, gradual rise as the physical optimal transport costs accumulate, the Symmetric Hypergraph Entropy experience a sharp, discontinuous drop from a highly entropic state ($\approx 1.0$) to a low-entropy state ($\approx 0.7$). This discrete step-function behavior clearly demonstrates a rapid state-to-state phase transition.

    Furthermore, a close examination of the temporal timeline reveals a distinct anticipatory sensitivity: the proposed Symmetric Hypergraph Entropy ($\mathrm{HE}_{\mathrm{sym}}$) triggers the abrupt entropy drop visibly earlier than both the topological distortion and the Persistence Entropy ($\mathrm{PE}$). This temporal precedence is theoretically well-founded within our macro-meso-micro framework. While $\mathcal{L}_{\mathrm{topo}}$ and $\mathrm{PE}$ require the global macroscopic loop ($H_1$) to fully mature and achieve a measurable barcode lifespan, $\mathrm{HE}_{\mathrm{sym}}$ continuously evaluates the instantaneous \emph{mesoscopic} incidence distribution. Consequently, it successfully detects the initial localized structural reorganizing---as agents just begin to align into the annular flow---well before the global vortex fully forms. 

    This mathematical behavior reflects the physical transition from a chaotic swarm to a deterministic mill state.
    
    \begin{figure}[H]
        \centering
        \includegraphics[width=\linewidth]{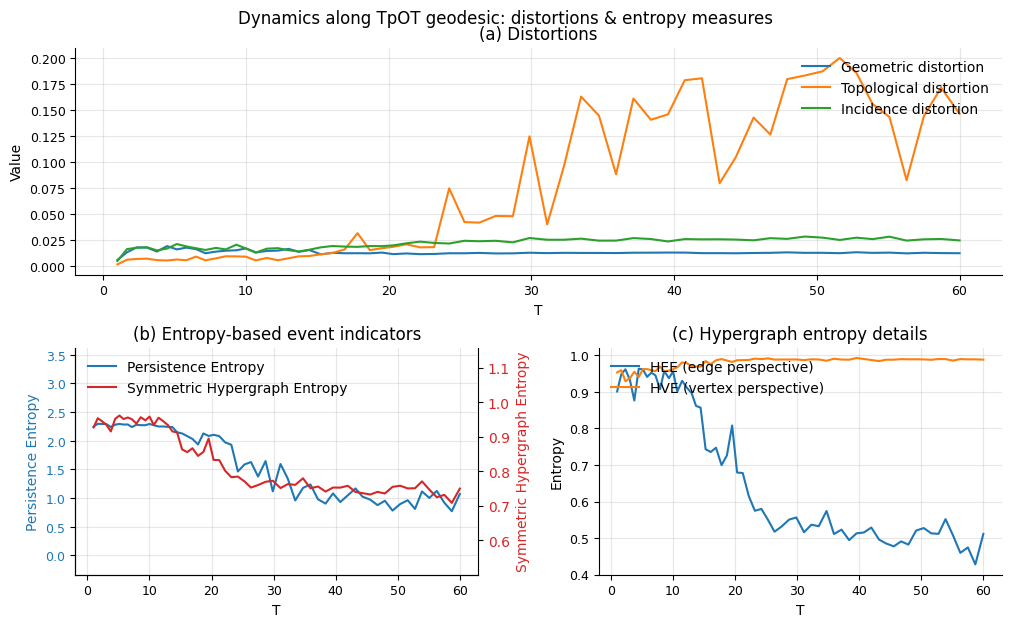}
        \caption{\revision{Baseline dynamic evaluation of the 4D D'Orsogna model. (a) The topological distortion $\mathcal{L}_{\mathrm{topo}}$ rises significantly to capture the formation of the homological loop. (b, c) Unlike previous bifurcation experiments, the structural entropies undergo a significant drop, rigorously quantifying the thermodynamic transition from a disorganized chaotic swarm into a highly self-organized, low-entropy vortex state.}}
        \label{fig:dorsogna_curves}
    \end{figure}
    
    \textbf{Dynamic Reconstruction of Topological Phase Transitions.} 
    To test our framework's capacity to handle high-dimensional phase spaces with extreme temporal sparsity, we aggressively subsampled the 61-frame sequence into merely four keyframes. We then applied Algorithm \ref{alg:overall} to reconstruct the 4-dimensional topological geodesic parameterized by $\tau \in [0,1]$. 
    }
    \revision{
    
    \begin{figure}[H]
        \centering
        \includegraphics[width=\linewidth]{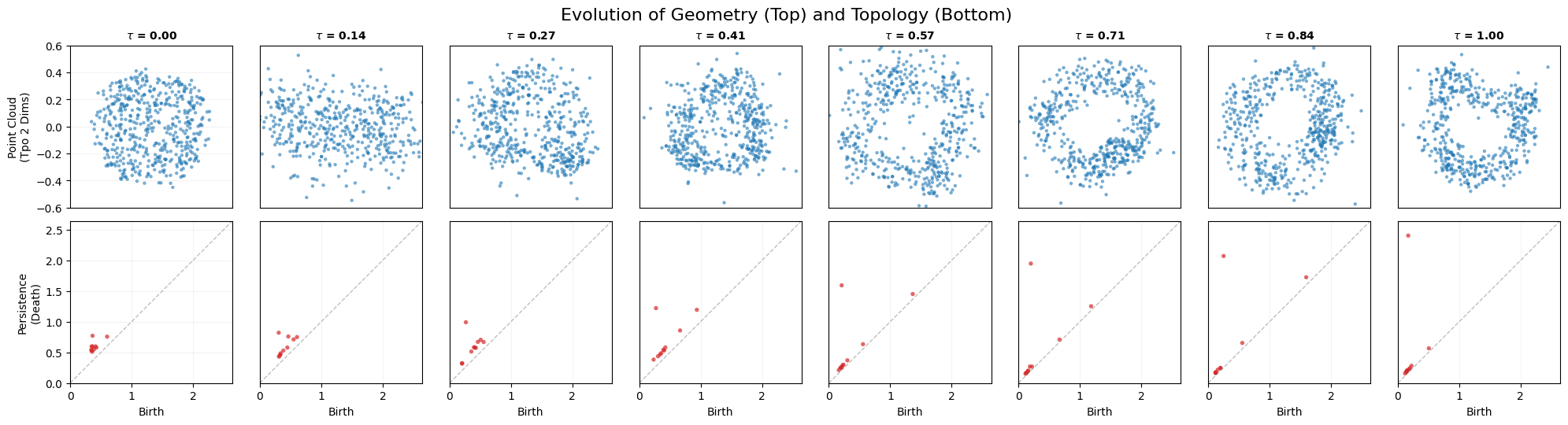}
        \caption{\revision{Reconstructed spatial geometry and persistence diagrams from only four keyframes. While the 2D visual projection exhibits minor alignment artifacts due to the rigid 4-dimensional optimal transport coupling, the 1-dimensional persistent homology accurately reconstructs the birth of the vortex core.}}
        \label{fig:dorsogna_interp_scatter}
    \end{figure}

    \begin{figure}[htbp]
        \centering
        \includegraphics[width=\linewidth]{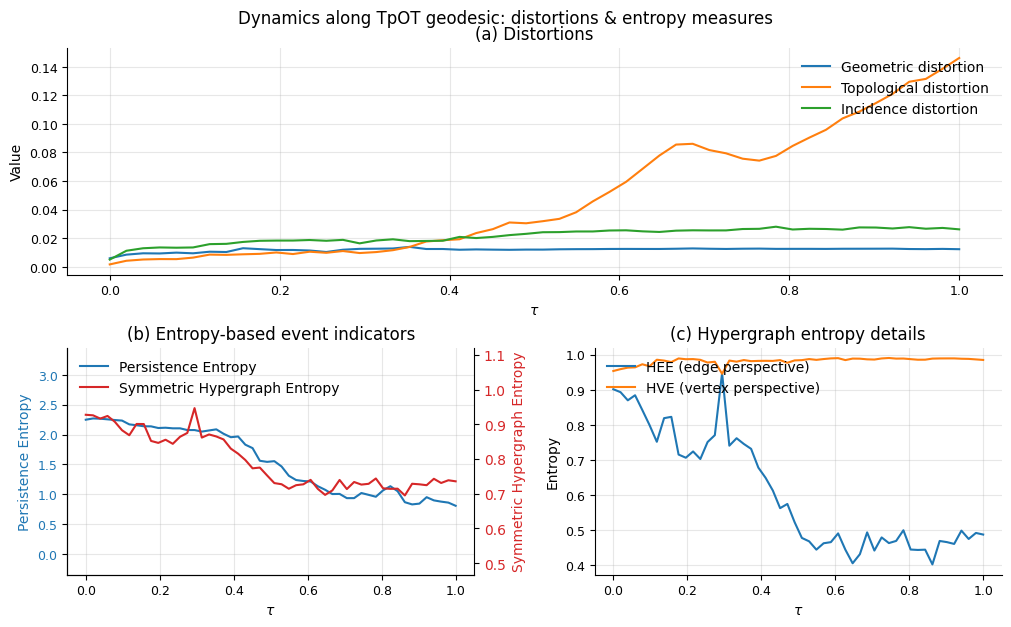}
        \caption{\revision{Experimental validation of the reconstructed structural dynamics. The TpOT geodesic interpolation tracks the ground truth dynamics from Figure \ref{fig:dorsogna_curves}, recovering both the the increase in topological distortion and the subsequent entropy drop associated with biological self-organization.}}
        \label{fig:dorsogna_interp_curves}
    \end{figure}
    
    As shown in Figure \ref{fig:dorsogna_interp_scatter}, our MDS-based geometric reconstruction interpolates the 4-dimensional phase space. It is worth noting that while the 2-dimensional $(x,y)$ spatial projections of the interpolated states may exhibit slight visual alignment artifacts, these are purely cosmetic. This visual discrepancy is a natural consequence of projecting and rigidly aligning the full 4-dimensional $(x,y,v_x,v_y)$ optimal transport coupling into a 2D viewing plane. Thus, these alignment artifacts have no impact on the quantitative results of our framework. Because our dynamic evaluation---including both the persistent homology reconstruction and the TpOT cost computation---relies strictly on the intrinsic pairwise distance function $k_\tau$, the mathematical outcomes are invariant to rigid rotational alignments used for visualization. The underlying topological structure and structural entropies are highly preserved.

    This is definitively proven by the dynamic curves computed along the interpolated geodesic $\tau$ (Figure \ref{fig:dorsogna_interp_curves}). The reconstructed trajectories mirror the ground truth sequence: the framework exactly recovers the continuous rise in topological distortion $\mathcal{L}_{\mathrm{topo}}$ and the precise trajectory of the entropy drop. This validates that our algorithm robustly captures complex, high-dimensional biological self-organization from highly sparse observations, maintaining strict mathematical accuracy even when low-dimensional visual projections become challenging.
    }

    \subsection{Real‐World Data: Stroke fMRI}\label{ssec:real}

     \subsubsection{Data description and preprocessing}  
    Let the fMRI data at two time points be denoted as
    $\mathcal{X}^{(M)} \in \mathbb{R}^{n_{x} \times n_{y} \times n_{z} \times T_{M}}, \quad \quad \mathcal{X}^{(Y)} \in \mathbb{R}^{n_{x} \times n_{y} \times n_{z} \times T_{Y}},$
    where $(i, j, k)$ indexes voxel coordinates in the spatial domain $\Omega$, and $t$ indexes time.

    To obtain a stable voxel representation, we compute the mean of the BOLD signal:
    \[
    \bar{\mathcal{X}}^{(M)}(i,j,k)=\frac{1}{T_{M}}\sum_{t=1}^{T_{M}}\mathcal{X}^{(M)}(i,j,k,t),\quad\bar{\mathcal{X}}^{(Y)}(i,j,k)=\frac{1}{T_{Y}}\sum_{t=1}^{T_{Y}}\mathcal{X}^{(Y)}(i,j,k,t).
    \]
    
    Each scan is a 4D volume of size \(64\times64\times34\times60\). For each time point, we compute the voxel-wise temporal average, producing two 3D volumes of size \(64\times64\times34\).  

    The human cerebral cortex exhibits highly heterogeneous patterns of functional connectivity. \revision{To eliminate voxel-level noise and enhance interpretability, we utilize the widely adopted Yeo7 atlas \cite{yeo2011organization} to partition the spatial domain $\Omega$ into seven functionally coherent regions. After aligning the atlas to the subject's native space, we obtain a discrete partition $\Omega = \bigcup_{v=1}^7 \Omega_v$, where $v \in \{1, \dots, 7\}$ indexes the large-scale functional networks (e.g., Visual, Somatomotor, Default Mode).}

    Each voxel is represented by a 4-dimensional feature vector incorporating spatial coordinates and the BOLD signal:
    $$x^{(M)}_{i,j,k}=(i,j,k,\:\bar{\mathcal{X}}^{(M)}(i,j,k)\:)^\top,\quad x^{(Y)}_{i,j,k}=(i,j,k,\:\bar{\mathcal{X}}^{(Y)}(i,j,k)\:)^\top.$$

    \revision{The region-specific point clouds for each functional network $v$ are thus defined as:}
    $$P_v^{(M)}=\{x^{(M)}_{i,j,k}\mid(i,j,k)\in\Omega_v^{(M)}\},\quad P_v^{(Y)}=\{x^{(Y)}_{i,j,k}\mid(i,j,k)\in\Omega_v^{(Y)}\}.$$

    Finally, we form the full voxel-wise datasets:
    $$X_M=\bigcup\limits_{v=1}^7P_v^{(M)}\in\mathbb{R}^{N_M\times4},\quad X_Y=\bigcup\limits_{v=1}^7P_v^{(Y)}\in\mathbb{R}^{N_Y\times4},$$
    with corresponding label vectors $\ell_M\in\{1,...,7\}^{N_M},\quad\ell_Y\in\{1,...,7\}^{N_Y}.$

    \begin{figure}[htbp]
      \centering
      \begin{subfigure}[b]{0.35\textwidth}
        \centering
        \includegraphics[width=\textwidth]{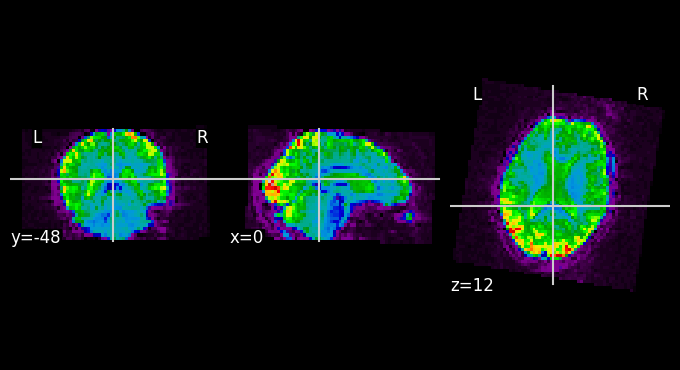}
        \caption{Raw fMRI volume (3-month)}
        \label{fig:original_fmri_data}
      \end{subfigure}
      \hfill 
      \begin{subfigure}[b]{0.6\textwidth}
        \centering
        \includegraphics[width=\textwidth]{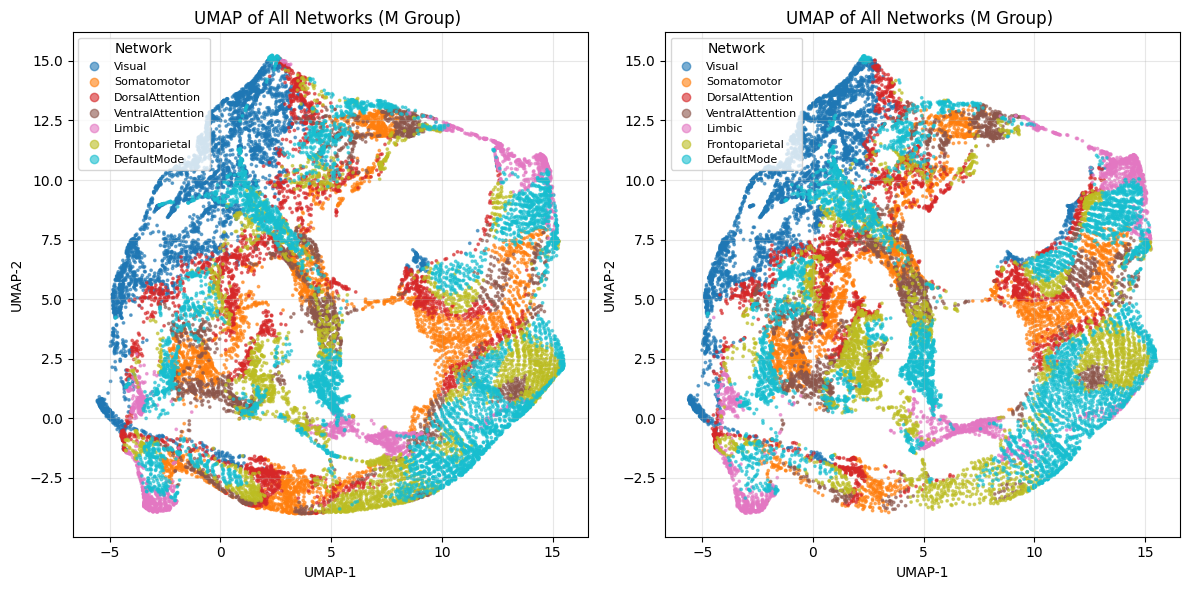}
        \caption{UMAP projections of parcellated networks}
        \label{fig:UMAP_projection_of_7_networks}
      \end{subfigure}
      \caption{\revision{Visualization of the stroke patient's fMRI data and dimensionality reduction. (a) An exemplary 3D spatial slice of the patient's raw fMRI volume. (b) 2D UMAP embeddings of the voxel feature vectors at 3-month (left) and 1-year (right) post-stroke time points. Points are colored by their corresponding Yeo7 functional network assignments.}}
      \label{fig:fmri_visualization_combined}
    \end{figure}

    \subsubsection{Embedding and network construction}  

    To visualize and analyse the stroke fMRI volumes at 3 months and 12 months, we applied the following dimensionality-reduction pipeline:

    \revision{\emph{Standardization and UMAP embedding.}  
    The voxel-wise datasets $X_M$ and $X_Y$ are standardized to have zero mean and unit variance. Uniform Manifold Approximation and Projection (UMAP) is then applied to map the standardized data from $\mathbb{R}^4$ to $\mathbb{R}^2$, yielding the respective low-dimensional embeddings $Z_M, Z_Y \subset \mathbb{R}^2$.} The result is shown in Figure \ref{fig:fmri_visualization_combined}(b).

    Within the embeddings, each Yeo7 network corresponds to a regional subset:
    $$Z_{M,v} = \{z_i \in Z_M \mid \ell_M[i] = v\}, \qquad Z_{Y,v} = \{z_i \in Z_Y \mid \ell_Y[i] = v\}.$$
    These subsets are essential for visualizing inter-network geometry and for constructing hypergraphs over the embedded regions in subsequent entropy-based analyses. 
      
    These 2D embeddings are both resampled to 600 points and then serve as the input point clouds for our TpOT analysis.  

    We then construct 1D persistent homology (retaining the top 20 persistence pairs to represent the dominant topological features), binary incidence matrices, and measure topological networks.
    
\revision{
\subsubsection{Multi-Scale Topological Analysis and Findings}  
The entropic TpOT problem is solved as before, and distortions are computed on an interpolation grid of 51 points spanning the geodesic between the 3-month and 12-month post-stroke states. 

To characterize the dynamic reconfiguration of functional brain organization, we evaluated the embedded voxel sets through a hierarchical set of indicators: the macroscopic metrics (Topological Distortion $\mathcal{L}_{\mathrm{topo}}$ and Persistence Entropy $\mathrm{PE}$), our proposed mesoscopic dual-perspective framework ($\mathrm{HE}_V$, $\mathrm{HE}_E$, and the aggregated $\mathrm{HE}_{\mathrm{sym}}$), and the baseline Geometric Distortion ($\mathcal{L}_{\mathrm{geom}}$). 

Figure \ref{fig:six_heatmaps} displays the heatmaps of these six indicators across the seven Yeo brain areas (vertical axis) and the interpolated temporal parameter $\tau \in [0,1]$ (horizontal axis). A comparative analysis across these heatmaps illustrates the multi-scale structural dynamics of our framework.

\paragraph{Macroscopic and Mesoscopic Dynamics.} 
The macroscopic indicators, $\mathrm{PE}$ and $\mathcal{L}_{\mathrm{topo}}$ (Figures \ref{fig:six_heatmaps}d, e), follow continuous trajectories and show similar global trends. Both metrics capture the distinct topological evolution (sharply rise at $\tau \approx 0.4$) in Network 2 (Somatomotor), characterizing structural shifts at a macroscopic scale. 

At the mesoscopic scale, the three hypergraph entropies (Figures \ref{fig:six_heatmaps}a-c) are more sensitive to local and transient topological reorganizations. These metrics reveal region-specific reorganization patterns that are typically obscured by macroscopic measures. Unlike synthetic datasets where transitions are often uniform, the real-world fMRI data shows a clear \emph{decoupling} between $\mathrm{HE}_V$ and $\mathrm{HE}_E$:
\begin{itemize}
    \item In certain brain networks, such as Network 2 (Somatomotor) and Network 3 (Dorsal Attention), $\mathrm{HE}_V$ and $\mathrm{HE}_E$ display opposite temporal trends.
    \item In Network 4 (Ventral Attention), $\mathrm{HE}_V$ remains stable while $\mathrm{HE}_E$ decreases around $\tau \approx 0.9$. 
    \item Conversely, in others such as Network 7 (Default Mode), $\mathrm{HE}_V$ and $\mathrm{HE}_E$ follow synchronized trajectories. 
\end{itemize}
This asynchronous behavior suggests that brain functional reorganization is complex, with node participation ($\mathrm{HE}_V$) and functional loop uniformity ($\mathrm{HE}_E$) evolving independently during stroke recovery. Symmetric Hypergraph Entropy ($\mathrm{HE}_{\mathrm{sym}}$) integrates these variations. As shown in Figure \ref{fig:six_heatmaps}a, $\mathrm{HE}_{\mathrm{sym}}$ combines features from both vertex- and edge-perspective entropies to represent mesoscopic structural transitions.

\paragraph{Microscopic Geometric Distortion.}
It is worth noting that the Geometric Distortion ($\mathcal{L}_{\mathrm{geom}}$, Figure \ref{fig:six_heatmaps}f) displays an almost uniform linear growth across all brain regions. This indicates that while pure geometric optimal transport distances reliably track the overall spatial displacement of the point clouds, they are naturally less sensitive to the complex topological phase transitions occurring within the functional networks during the recovery process.

\paragraph{Vertex-Level Localization.}
To estimate the spatial distribution of these structural transitions, Figure \ref{fig:seg7_geodesic} illustrates the vertex-level hypergraph entropy field on the Dorsal Attention Network. We mapped the absolute cycle-level entropy variation $|\Delta \mathrm{HE}| = |\mathrm{HE}_\tau - \mathrm{HE}_0|$ back to the vertex domain (as defined in Eq. \ref{eq:point_score}). Each point represents a vertex in the resampled UMAP embedding, and its color encodes the propagated entropy change. Regions highlighted in red correspond to vertices that participate in cycles exhibiting the strongest entropy variations, indicating localized structural reorganization within the hypergraph. This spatial map, complemented by the exact embeddings at 3 months and 1 year (Figures \ref{fig:yeo7_M} and \ref{fig:yeo7_Y}) , demonstrates how our cycle-to-point entropy propagation effectively identifies key local transformations. Additionally, the specific distortion curves on the Dorsal Attention Network is presented in Figure \ref{fig:yeo7_curve}
}

\begin{figure}[htbp]
  \centering
  \begin{subfigure}[t]{0.32\textwidth}
    \centering
    \includegraphics[width=\textwidth]{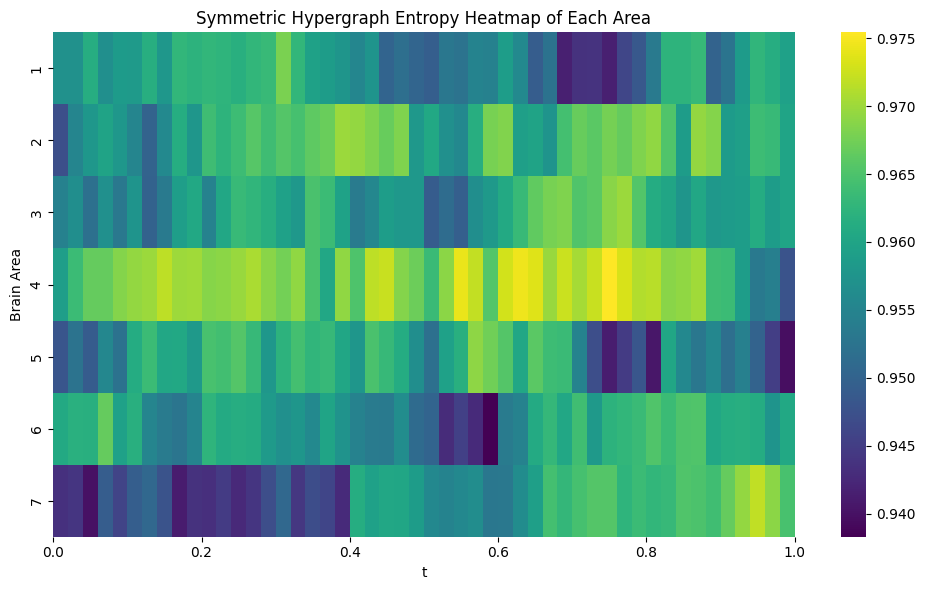} 
    \caption{Symmetric Entropy ($\mathrm{HE}_{\mathrm{sym}}$)}
    \label{fig:heatmap_sym}
  \end{subfigure}
  \hfill 
  \begin{subfigure}[t]{0.32\textwidth}
    \centering
    \includegraphics[width=\textwidth]{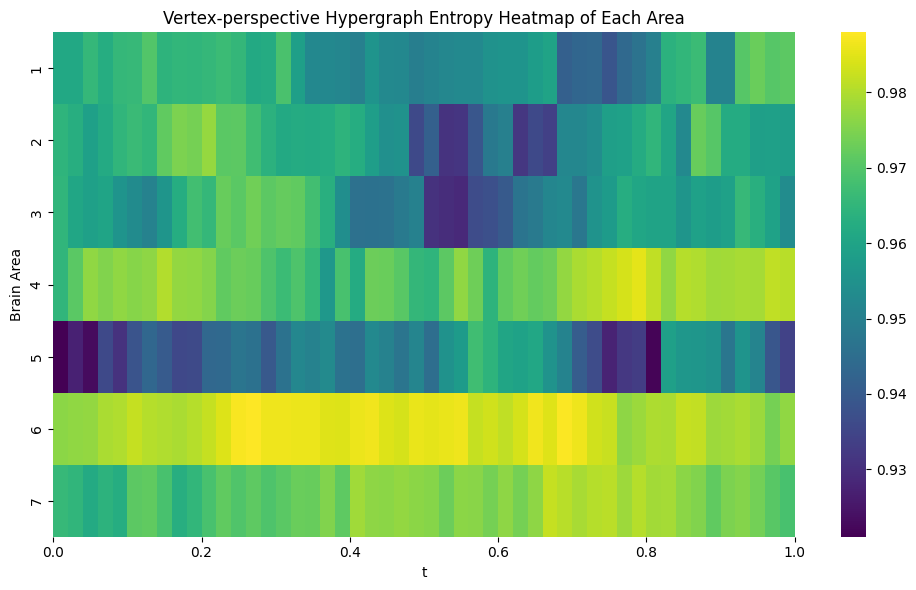} 
    \caption{Vertex-perspective Entropy ($\mathrm{HE}_V$)}
    \label{fig:heatmap_V}
  \end{subfigure}
  \hfill 
  \begin{subfigure}[t]{0.32\textwidth}
    \centering
    \includegraphics[width=\textwidth]{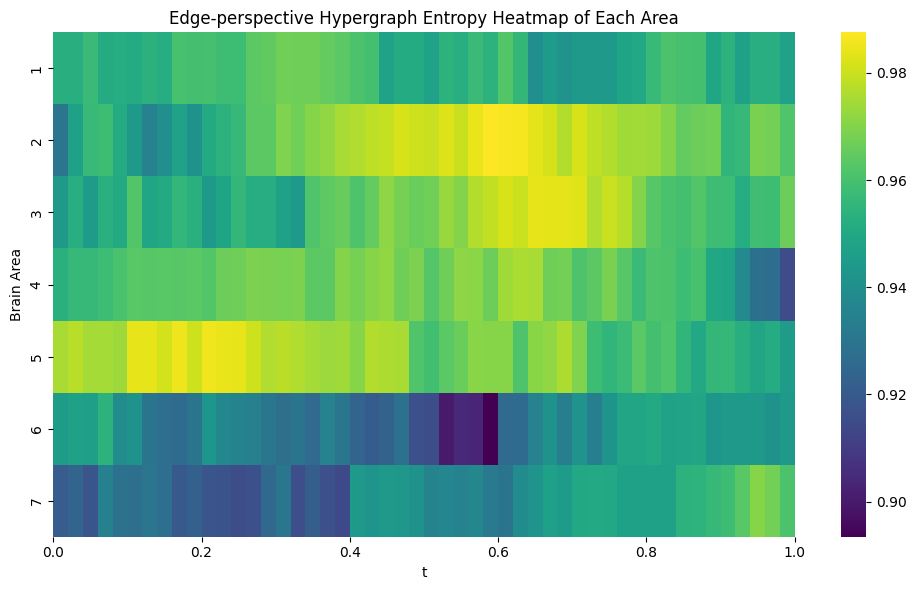} 
    \caption{Edge-perspective Entropy ($\mathrm{HE}_E$)}
    \label{fig:heatmap_E}
  \end{subfigure}
  
  \vspace{0.3cm} 
  
  \begin{subfigure}[t]{0.32\textwidth}
    \centering
    \includegraphics[width=\textwidth]{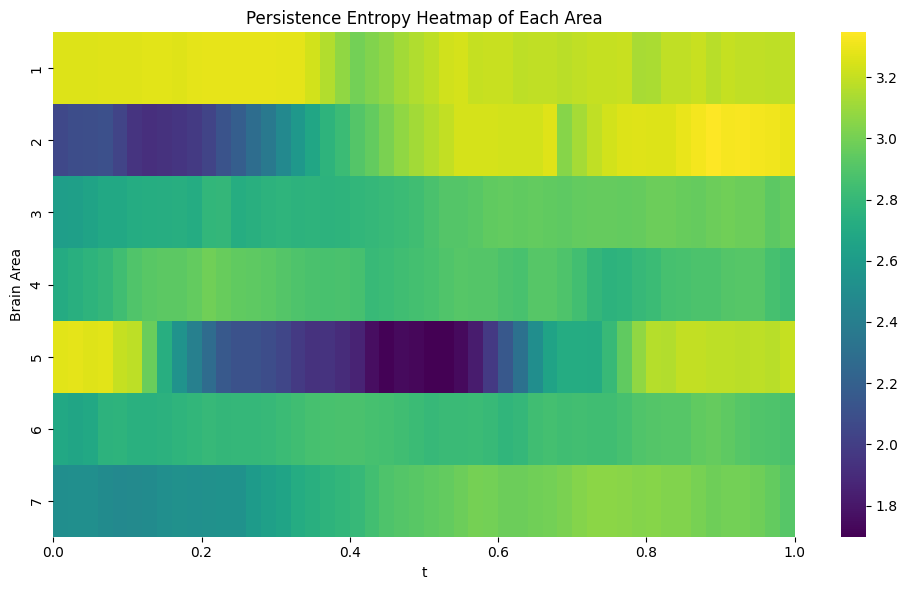} 
    \caption{Persistence Entropy ($\mathrm{PE}$)}
    \label{fig:heatmap_PE}
  \end{subfigure}
  \hfill 
  \begin{subfigure}[t]{0.32\textwidth}
    \centering
    \includegraphics[width=\textwidth]{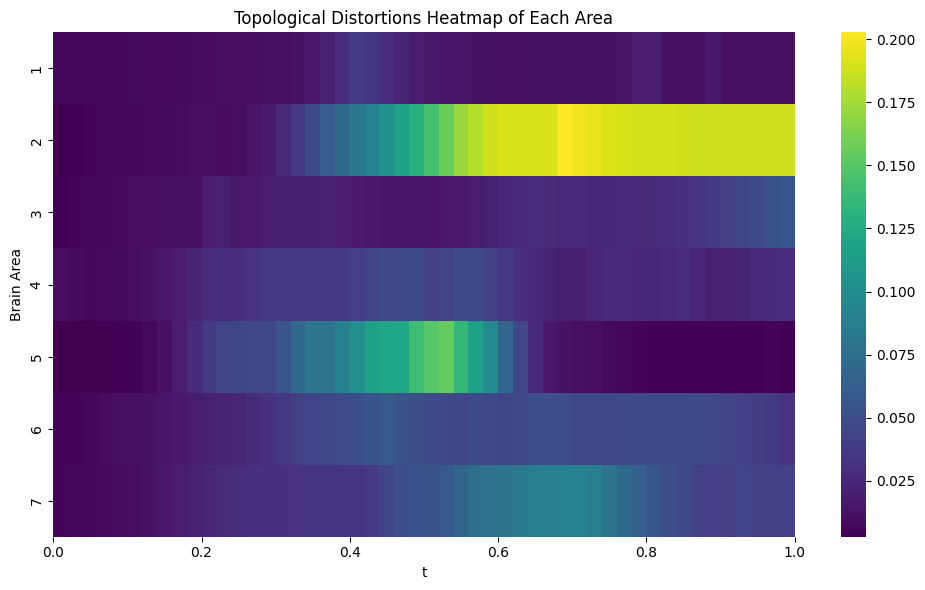} 
    \caption{Topological Distortion ($\mathcal{L}_{\mathrm{topo}}$)}
    \label{fig:heatmap_topo}
  \end{subfigure}
  \hfill 
  \begin{subfigure}[t]{0.32\textwidth}
    \centering
    \includegraphics[width=\textwidth]{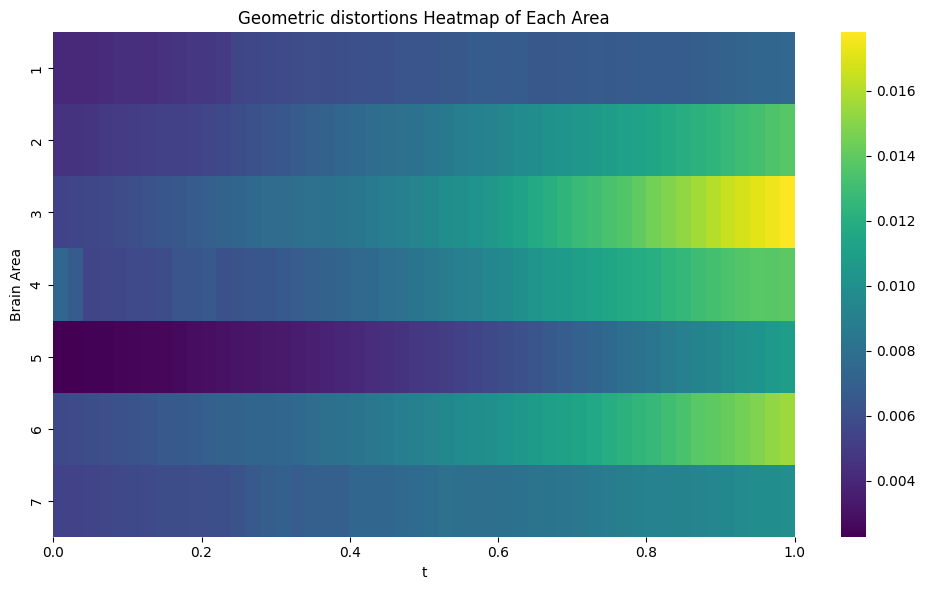} 
    \caption{Geometric Distortion ($\mathcal{L}_{\mathrm{geom}}$)}
    \label{fig:heatmap_geo}
  \end{subfigure}
  \caption{\revision{\textbf{Dynamic entropy and distortion heatmaps across seven functional brain networks (Yeo7) along the TpOT geodesic.} The mesoscopic hypergraph entropies (a-c) capture asynchronous local rewirings and region-specific variations. In contrast, the macroscopic Persistence Entropy (d) and Topological Distortion (e) characterize global structural shifts (e.g., in Network 2). The Geometric Distortion (f) exhibits uniform linear growth, tracking overall spatial displacement rather than topological phase transitions.}}
  \label{fig:six_heatmaps}
\end{figure}

    \begin{figure}[htbp]
      \centering
      \begin{subfigure}[t]{0.33\textwidth}
          \centering
          \includegraphics[width=\textwidth]{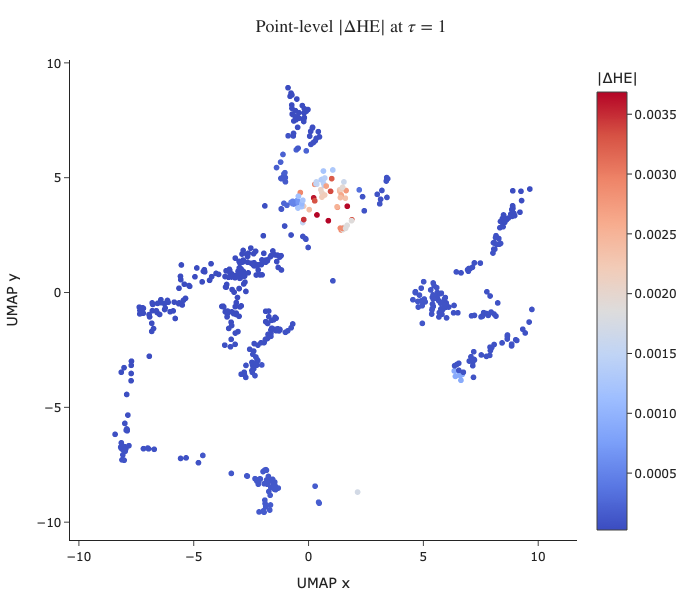}
          \caption{Visualization of vertex-level hypergraph entropy change in Dorsal Attention Network defined as (\ref{eq:point_score})}
          \label{fig:seg7_geodesic}
      \end{subfigure}
      \centering
      \begin{subfigure}[t]{0.32\textwidth}
        \centering
        \includegraphics[width=\textwidth]{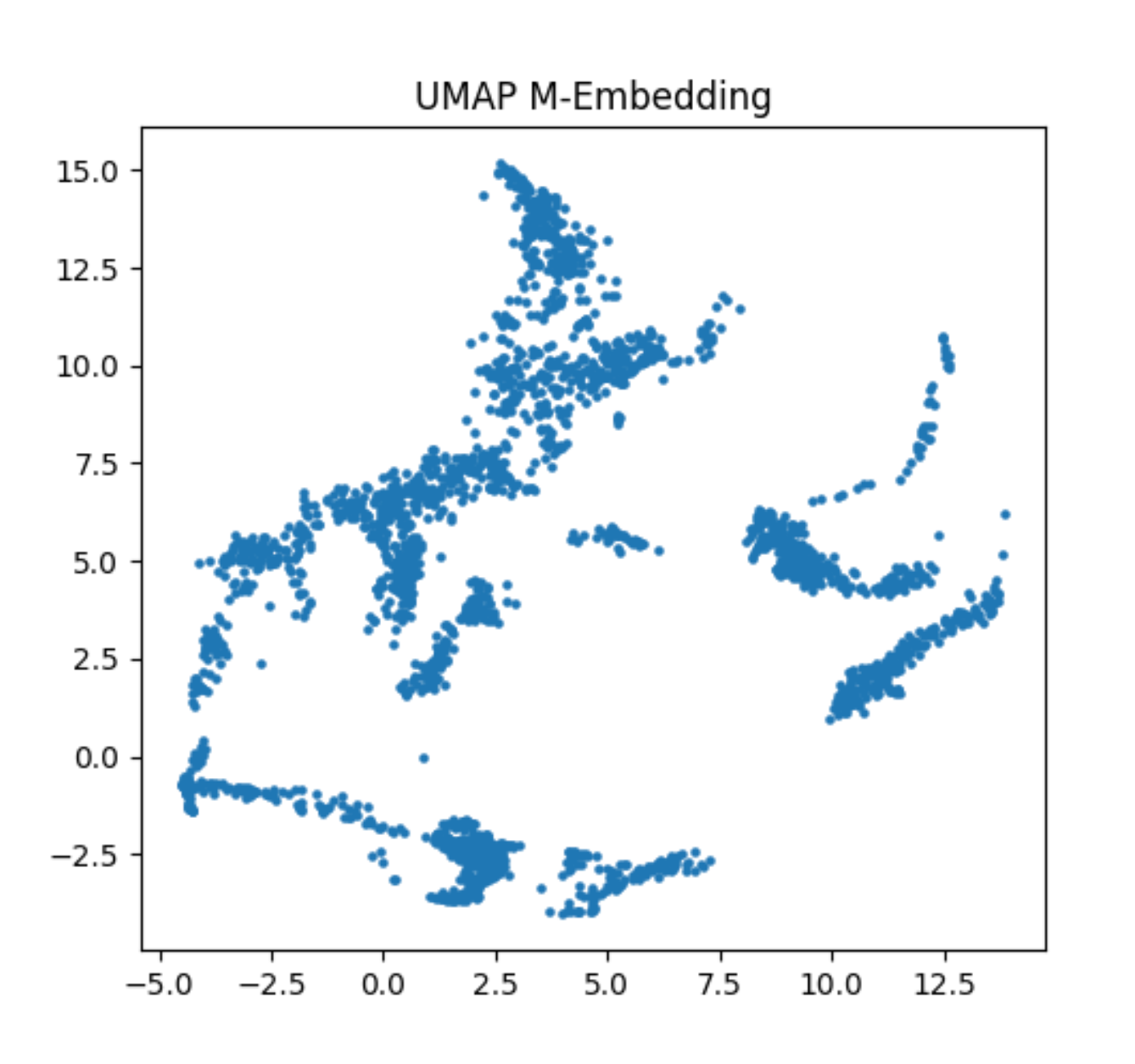}
        \caption{Embeddings of the Dorsal Attention network at 3 month}
        \label{fig:yeo7_M}
      \end{subfigure}
    \hfill
      \begin{subfigure}[t]{0.32\textwidth}
        \centering
        \includegraphics[width=\textwidth]{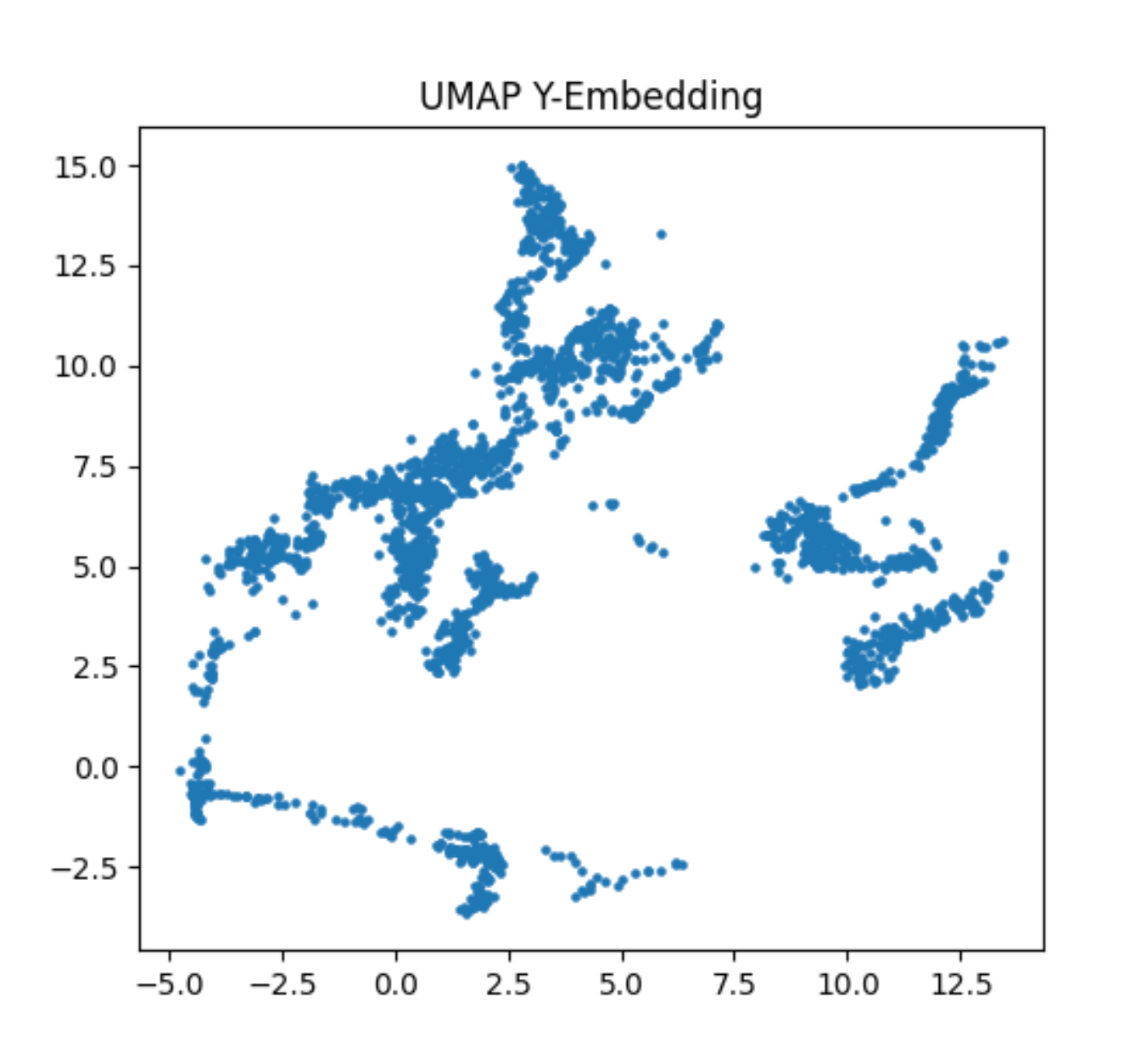}
        \caption{Embeddings of the Dorsal Attention network at 1 year}
        \label{fig:yeo7_Y}
      \end{subfigure}
      \caption{Vertex-level hypergraph entropy analysis of the Dorsal Attention network.}
      \label{fig:yeo7_MY}
    \end{figure}

    \begin{figure}[htbp]
        \centering
        \includegraphics[width=0.8\linewidth]{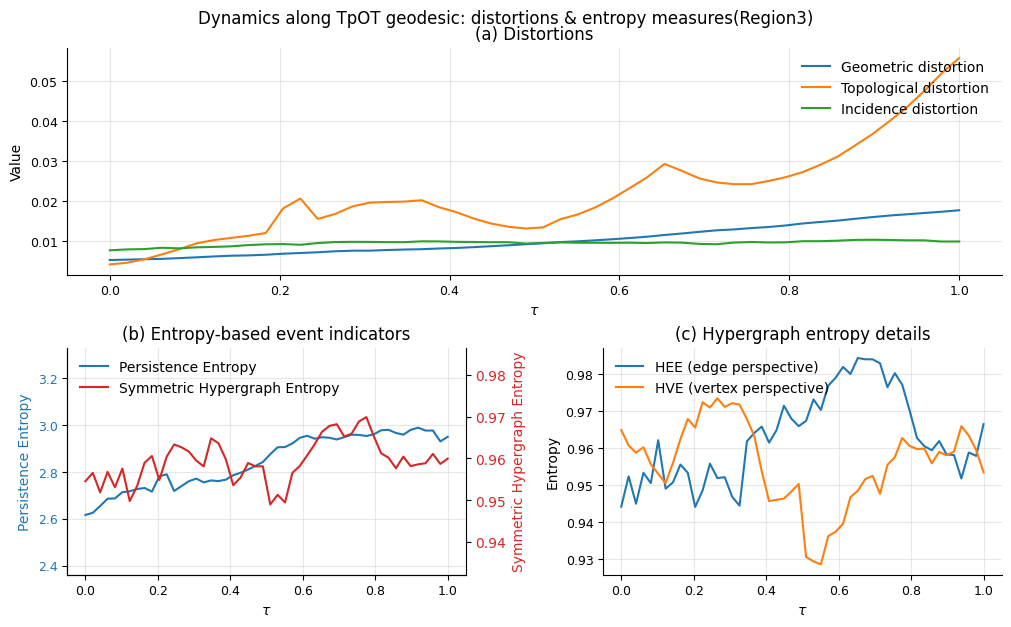}
        \caption{Distortion and entropy trajectories along the TpOT geodesic of the Dorsal Attention network.}
        \label{fig:yeo7_curve}
    \end{figure}

\paragraph{Computational details.}  
All experiments were implemented in Python and using Ripserer.jl for persistent homology.  Synthetic runs required approximately 20 seconds per \revision{frame} on an M3 Pro CPU; the real‐world experiment took approximately 300 seconds.

\revision{
\paragraph{Conclusion.} 
Experiments on stochastic models and biological systems show that the TpOT framework recovers topological phase transitions from sparse temporal observations. The dynamic distortion components ($\mathcal{L}_{\mathrm{geom}}$, $\mathcal{L}_{\mathrm{hyper}}$, $\mathcal{L}_{\mathrm{topo}}$) distinguish continuous physical deformations from discrete structural jumps, identifying a hierarchical critical transition across macro-, meso-, and micro-scales.

Our structural entropy indicators serve as threshold-free markers for various topological events. While Persistence Entropy tracks the lifespan of global homological features, Symmetric Hypergraph Entropy ($\mathrm{HE}_{\mathrm{sym}}$) acts as an early-warning indicator for both bifurcations (indicated by entropy jumps) and biological self-organization (indicated by entropy drops). Furthermore, negative control evaluations confirm the dimension-selectivity and specificity of these metrics.

Finally, application to stroke fMRI data illustrates the utility of the dual-perspective framework. The decoupling between vertex- and edge-perspective entropies reveals asymmetric cortical reorganization. Combined with the point-level hypergraph-entropy field to localize the structural origins of these transitions, our method provides a multi-scale approach for analyzing dynamic topological phenomena in complex systems.
}

\section{Summary and Future Work}\label{sec:conclusion}
In this paper, we introduced a rigorous mathematical framework for tracking dynamic structural transitions in time-varying point clouds. \revision{By utilizing a topological and hypergraph reconstruction strategy instead of direct abstract network interpolation, our method yields continuous topological trajectories} from sparse temporal snapshots. Along these trajectories, we proposed a \revision{hierarchical} evaluation framework. We demonstrated that \revision{macroscopic metrics (such as PE and Topological Distortion) are well-fitted for capturing global evolutions, whereas our proposed mesoscopic dual-perspective Hypergraph Entropy ($\mathrm{HE}_V$ and $\mathrm{HE}_E$) provides a highly sensitive lens for detecting transient, asynchronous local rewirings}. Our validation across \revision{physical, biological, and neuroimaging datasets} confirms the specificity and complementarity of these \revision{multi-scale} indicators.

\revision{The real-world stroke fMRI experiment in this study served primarily as a methodological proof-of-concept to demonstrate the computational sensitivity of our mathematical framework.} Looking ahead, our future work will focus on \revision{extending this framework to large-scale longitudinal clinical cohorts, statistically correlating our dynamic hypergraph entropy curves with cognitive and motor recovery scores to establish robust topological biomarkers for post-stroke rehabilitation. Our future work will explore the underlying biological mechanisms and information geometry driving the entropy decoupling observed in these functional brain networks. We aim to mathematically explain how geometric deformations of probability support sets drive this asynchronous structural evolution. Finally, developing scalable approximations for TpOT and topological extraction on massive, unparcellated graphs remains a crucial computational direction.}

\section*{Acknowledgements}
We would like to thank Professor Xiaosong Yang for the helpful discussions. This work was supported by the National Natural Science Foundation of China (12401233), NSFC International Creative Research Team (W2541005), National Key Research and Development Program of China (2021ZD0201300), Guangdong-Dongguan Joint Research Fund (2023A1515140016), Guangdong Provincial Key Laboratory of Mathematical and Neural Dynamical Systems (2024B1212010004), Guangdong Major Project of Basic Research (2025B0303000003), and Hubei Key Laboratory of Engineering Modeling and Scientific Computing.

\appendix
\section{Supplementary material}
    \subsection{Persistent Homology}
\label{Supplementary material:TpOT}

    Given a finite point cloud $X\subset\mathbb{R}^d$, Persistent Homology(PH) constructs a nested sequence of simplicial complexes (e.g., the Vietoris–Rips or Čech complexes) parameterized by a scale parameter $\varepsilon$\cite{zomorodian2004computing}.  As $\varepsilon$ increases, simplices are added whenever all pairwise distances among their vertices fall below $\varepsilon$, yielding a filtration
    \begin{align*}
        K_\varepsilon(X)\;:\quad K_{\varepsilon_0}\;\hookrightarrow\;K_{\varepsilon_1}\;\hookrightarrow\;\cdots\;\hookrightarrow\;K_{\varepsilon_M},
    \end{align*}
    where $ K_{\varepsilon_i}\subseteq K_{\varepsilon_{i+1}} $.  By tracking the appearance (“birth”) and disappearance (“death”) of homology classes (connected components, loops, etc.) throughout this filtration, one obtains a persistence diagram—a multiset of points $\{(b_i,d_i)\}$ in the plane, each recording the interval $(b_i,d_i)$ over which a topological feature exists.

    The multiset of lifespans $\{d_i - b_i\}$ serves as a succinct “signature” of the data’s topology: longer intervals correspond to more prominent features, while short-lived intervals often reflect topological noise.  Persistence diagrams are stable under perturbations of the input, and admit well-studied metrics such as the bottleneck and Wasserstein distances\cite{cohen2005stability,lacombe2018large}.

    \subsection{Gromov-Wasserstein and Co-Optimal Transport Distances}
    In this subsection we review three fundamental optimal-transport-based metrics that form the building blocks of the Topological Optimal Transport framework.

    Let $(X,d)$ be a Polish metric space and $\mu,\mu^\prime$ two Borel probability measures supported on X.  For $p\in1,\infty)$, the p\nobreakdash-Wasserstein distance is defined by
    \begin{align*}
        d_{W,p}(\mu,\mu^\prime)
        =\Bigl(\inf_{\pi\in\Pi(\mu,\mu^\prime)}\int_{X\times X} d(x,x^\prime)^p\mathrm{d}\pi(x,x^\prime)\Bigr)^{1/p},
    \end{align*}
    where $\Pi(\mu,\mu{^\prime})$ denotes the set of couplings (joint measures) with marginals $\mu$ and $\mu{^\prime}$, and an optimal coupling $\pi$ realises this infimum.

    When comparing persistence diagrams $D$ and $D{^\prime}$, one augments the plane with a “diagonal” point $\partial\Lambda$ to allow unmatched features, and replaces $\Pi(\mu,\mu{^\prime})$ by the set of admissible partial matchings $\Pi(D,D{^\prime})$.  The resulting diagram-Wasserstein distance is
    \begin{align*}
        d_{W,p}^{\mathrm{PD}}(D,D^\prime)^p
        =\min_{\pi\in\Pi(D,D^\prime)}\Bigl(
        \sum_{(a,b)\in\pi}\left \|a-b\right \|_p^p
        +\sum_{s\in U_\pi}\left \|s-\mathsf{Proj}{\partial\Lambda}(s)\right \|_p^p
        \Bigr),
    \end{align*}
    where $U_\pi$ are unmatched points and $\mathsf{Proj}_{\partial\Lambda}$ projects onto the diagonal\cite{lacombe2018large}.

    When the two measures live on different metric spaces $(X,d,\mu)$ and $(X{^\prime},d{^\prime},\mu{^\prime})$, the Gromov–Wasserstein (GW) distance aligns their relational structures by minimizing differences of pairwise distances.  For a coupling $\pi\in\Pi(\mu,\mu{^\prime})$, the p\nobreakdash-distortion
    \begin{align*}
        \mathrm{dis}_{\mathrm{GW},p}(\pi)
        =\Bigl(\!\iint_{(X\times X{^\prime})^2}\bigl|\,d(x,y)-d{^\prime}(x{^\prime},y{^\prime})\bigr|^p\,\mathrm{d}\pi(x,x{^\prime})\,\mathrm{d}\pi(y,y{^\prime})\Bigr)^{1/p}.
    \end{align*}

    The GW distance is then
    \begin{align*}
        d_{\mathrm{GW},p}\bigl((X,d,\mu),(X^\prime,d^\prime,\mu^\prime)\bigr)
        =\inf_{\pi\in\Pi(\mu,\mu’)}\mathrm{dis}_{\mathrm{GW},p}(\pi),
    \end{align*}
    which defines a pseudo-metric on the metric-measure spaces\cite{memoli2007use}.

    To compare two measure hypernetworks $H=(X,\mu,Y,\nu,\omega)$ and $H^\prime=(X^\prime,\mu^\prime,Y^\prime,\nu^\prime,\omega^\prime)$, where $\omega$ encodes vertex–hyperedge incidences, one seeks couplings $\pi^v\in\Pi(\mu,\mu\prime) $(vertices) and $\pi^e\in\Pi(\nu,\nu\prime)$ (hyperedges) that minimise
    \begin{align*}
        \mathrm{dis}_{\mathrm{COOT},p}(\pi^v,\pi^e)
        =\Bigl(\!\int_{X\times X^\prime\times Y\times Y^\prime}\!\bigl|\omega(x,y)-\omega^\prime(x^\prime,y^\prime)\bigr|^p
        \,\mathrm{d}\pi^v(x,x^\prime)\,\mathrm{d}\pi^e(y,y^\prime)\Bigr)^{1/p}.
    \end{align*}
    The co-optimal transport distance is then
    \begin{align*}
        d_{\mathrm{COOT},p}(H,H^\prime)
        =\inf_{\substack{\pi^v\in\Pi(\mu,\mu^\prime)\\ \pi^e\in\Pi(\nu,\nu^\prime)}}\mathrm{dis}_{\mathrm{COOT},p}(\pi^v,\pi^e),
    \end{align*}
    inducing a pseudo-metric on the space of measure hypernetworks\cite{chowdhury2024hypergraph}.

    \subsection{Measure Topological Network}\label{ssec:measure_topological_network}
    A \emph{measure topological network} is defined as the triple
    \begin{align*}
        P \;=\;\bigl((X,\,k,\,\mu),\;\,(Y,\,\iota,\,\nu),\;\omega\bigr),
    \end{align*}
    which integrates geometric, topological, and incidence information into a unified measure‐theoretic framework\cite{zhang2025topological}. Below we describe each component in detail.

\bigskip
    \textbf{Geometric Component \(\,(X,\,k,\,\mu)\).}  
    \begin{itemize}
        \item \(X = \{x_1,\dots,x_N\}\subset \mathbb{R}^d\) is a finite point cloud representing the raw data samples.  
        \item \(k\colon X\times X\to\mathbb{R}\) is a symmetric kernel or similarity function; for example, one may take
        \[
            k(x,x') \;=\; \exp\bigl(-\|x - x'\|^2/\sigma^2\bigr)
            \quad\text{or}\quad
            k(x,x') \;=\;\|x - x'\|,
        \]

        local geometric affinities or pairwise distances.  
        \item \(\mu\) is a probability measure supported on \(X\), often chosen to be the uniform distribution \(\mu(\{x_i\})=1/N\).  This measure allows us to speak of “mass” at each data point and to transport mass in later constructions.
    \end{itemize}

\bigskip
    \textbf{Topological Component \(\,(Y,\,\iota,\,\nu)\).}  
    \begin{itemize}
        \item \(Y\) is a locally compact Polish space whose points correspond to homology generators (e.g.\ cycles) extracted via persistent homology.  
        \item \(\iota\colon Y\to\Lambda\) is a continuous map into the persistence‐diagram domain
        \(\Lambda = \{(b,d)\in\mathbb{R}^2\mid d>b\ge0\}\).  Under \(\iota\), each generator \(y\in Y\) is sent to its birth–death pair \(\iota(y)=(b_y,d_y)\).  
        \item \(\nu\) is a Radon measure on \(Y\) such that the push‐forward \(\iota_{\#}\nu\) coincides with the usual persistence‐diagram measure on \(\Lambda\).  In practice one may take \(\nu\) to assign equal mass to each cycle representative in a given homological dimension.
    \end{itemize}

\bigskip
    \textbf{Incidence Function \(\omega\colon X\times Y\to\mathbb{R}\).}  
    The function \(\omega\) records the binary membership of points in cycles:
    \begin{align*}
        \omega(x,y) \;=\;
        \begin{cases}
            1, & x \text{ is a vertex of the cycle represented by } y,\\
            0, & \text{otherwise}.
        \end{cases}
    \end{align*}
    By treating \(\omega\) as a measurable kernel, we couple the geometric and topological parts: mass transported between two point clouds in \(X\) can be coherently matched with transport of their associated cycles in \(Y\).

    \bigskip

    The measure topological network \(P\) simultaneously captures:
    \begin{itemize}
        \item \emph{Metric structure} through \((X,k,\mu)\), enabling geometry‐aware transport;
        \item \emph{Topological features} via \((Y,\iota,\nu)\), preserving the birth–death statistics of homology classes;
        \item \emph{Higher‐order relation} through \(\omega\), enforcing consistency between points and the cycles they generate.
    \end{itemize}

    \subsection{Topological Optimal Transport (TpOT)}

    Given two measure topological networks
    \[
        P = \bigl((X,k,\mu),\,(Y,\iota,\nu),\,\omega\bigr)
        \quad\text{and}\quad
        P' = \bigl((X',k',\mu'),\,(Y',\iota',\nu'),\,\omega'\bigr),
    \]
    The \emph{Topological Optimal Transport} (TpOT) distance of order \(p\) then is defined by
    \begin{equation}\label{eq:tpot}
        d_{\mathrm{TpOT},p}(P,P')
        \;=\;
        \inf_{\substack{\pi^v \,\in\,\Pi(\mu,\mu')\\[3pt]
                      \pi^e \,\in\,\Pi_{\mathrm{adm}}(\nu,\nu')}}
        \Bigl[
          \mathcal{L}_{\mathrm{geom}}(\pi^v)
          \;+\;
          \mathcal{L}_{\mathrm{topo}}(\pi^e)
          \;+\;
          \mathcal{L}_{\mathrm{hyper}}(\pi^v,\pi^e)
        \Bigr]^{1/p},
    \end{equation}
    where the infimum is taken over all vertex–vertex couplings \(\pi^v\) and admissible topology couplings \(\pi^e\)\cite{zhang2025topological}.  Intuitively, \(\pi^v\) matches data points in \(X\) with those in \(X'\), while \(\pi^e\) matches homology generators in \(Y\) with those in \(Y'\).  The three distortion terms quantify mismatches of geometry, topology, and incidence structure, respectively.

    \bigskip
    \textbf{Geometric distortion \(\mathcal{L}_{\mathrm{geom}}\).}  
    This term generalises the Gromov–Wasserstein discrepancy to our kernelized setting:
    \begin{equation}
        \mathcal{L}_{\mathrm{geom}}(\pi^v)
        \;=\;
        \iint_{(X\times X')^2}
        \bigl|\,k(x_1,x_2)\;-\;k'(x'_1,x'_2)\bigr|^p
        \;\mathrm{d}\pi^v(x_1,x'_1)\,\mathrm{d}\pi^v(x_2,x'_2).
    \end{equation}
    By comparing pairwise affinities \(k\) versus \(k'\), this term ensures that the global metric relationships among points are preserved under the optimal coupling.

     \bigskip
    \textbf{Topological distortion \(\mathcal{L}_{\mathrm{topo}}\).}  
    The TpOT measures distance between persistence diagrams via a classical Wasserstein cost:
    \begin{equation}
        \mathcal{L}_{\mathrm{topo}}(\pi^e)
        \;=\;
        \int_{\bar{Y}\times\bar{Y}'}
        \bigl\lVert \iota(y)\;-\;\iota'(y')\bigr\rVert_{p}^{p}
        \;\mathrm{d}\pi^e(y,y'),
    \end{equation}
    where \(\bar{Y}=Y\cup\{\partial_Y\}\) and likewise for \(\bar{Y}'\), with \(\partial_Y\) the diagonal “null” feature.  This term aligns birth–death pairs, penalizing large shifts in feature lifetimes.

     \bigskip
    \textbf{Hypergraph incidence distortion \(\mathcal{L}_{\mathrm{hyper}}\).}  
    Finally, to couple points and cycles consistently, we have
    \begin{equation}
        \mathcal{L}_{\mathrm{hyper}}(\pi^v,\pi^e)
        \;=\;
        \int_{X\times X'\times Y\times Y'}
        \bigl|\omega(x,y)\;-\;\omega'(x',y')\bigr|^p
        \;\mathrm{d}\pi^v(x,x')\,\mathrm{d}\pi^e(y,y').
    \end{equation}
    Since \(\omega\) and \(\omega'\) are binary incidence functions, this term enforces that matched points participate in matched cycles, thereby preserving higher‐order topological connectivity.

    \bigskip
    In summary, TpOT simultaneously optimizes over correspondences of points and cycles, striking a balance between geometric fidelity, topological consistency, and cycle membership preservation.  The resulting distance \(d_{\mathrm{TpOT},p}\) defines a pseudo-metric on the space of measure topological networks, suitable for comparing complex data with both geometry and topology.

    \subsection{Geodesic Interpolation in TpOT Space} 
    
    An important property of the TpOT framework is that the space of measure topological networks endowed with the distance \(d_{\mathrm{TpOT},p}\) is a (non‐negatively curved) geodesic space.  In particular, given two networks \(P\) and \(P'\) and an optimal coupling, one can explicitly construct a constant‐speed geodesic between them via convex combinations of their data.  The following result summarises this construction.

    \medskip
    Let
    \[
        P = \bigl((X,\,k,\,\mu),\,(Y,\,\iota,\,\nu),\,\omega\bigr),
        \quad
        P' = \bigl((X',\,k',\,\mu'),\,(Y',\,\iota',\,\nu'),\,\omega'\bigr),
    \]
    and let \(\pi^v\in\Pi(\mu,\mu')\), \(\pi^e\in\Pi_{\mathrm{adm}}(\nu,\nu')\) be optimal couplings achieving the infimum in \eqref{eq:tpot}.  Then for each \(t\in[0,1]\), the interpolated network is defined as
    \begin{equation}\label{eq:geodesic}
        P_t \;=\;\Bigl(\;(\widetilde{X},\,k_t,\,\pi^v),\;(\widetilde{Y},\,\iota_t,\,\pi^e),\;\omega_t\Bigr),
    \end{equation}
    where:
    \begin{itemize}
        \item \(\widetilde{X} = X\times X'\) is the product of the two point clouds, endowed with the coupling measure \(\pi^v\).  
        \item \(\widetilde{Y} = (Y\times Y')\;\cup\;(Y\times\{\partial_{Y'}\})\;\cup\;(\{\partial_Y\}\times Y')\) augments the cycle space with diagonal placeholders to accommodate unmatched generators, carrying the coupling \(\pi^e\).  
        \item \textbf{Geometric kernel interpolation:}
        \[
            k_t\bigl((x_1,x'_1),(x_2,x'_2)\bigr)
            \;=\;
            (1-t)\,k(x_1,x_2)
            \;+\;
            t\,k'(x'_1,x'_2).
        \]
        At \(t=0\), this recovers the original kernel \(k\), and at \(t=1\) it recovers \(k'\), while for intermediate \(t\) it provides a linear blend of affinities.  
        \item \textbf{Topological coordinate interpolation:}
          \[
            \iota_t(y,y')
            \;=\;
            (1-t)\,\iota(y)
            \;+\;
            t\,\iota'(y').
          \]
        Here \(\iota(y)\) and \(\iota'(y')\) lie in the persistence diagram plane, and their convex combination traces a straight line segment between birth–death pairs.  
        \item \textbf{Hyperedge incidence interpolation:}
          \[
            \omega_t\bigl((x,x'),(y,y')\bigr)
            \;=\;
            (1-t)\,\omega(x,y)
            \;+\;
            t\,\omega'(x',y').
          \]
        This interpolation maintains fractional membership values, ensuring that the binary incidence structure of cycles deforms continuously along the geodesic.
    \end{itemize}

    \medskip
    Besides, we have the following properties: 
    \begin{itemize}
      \item {Metric geodesicity.}  The space \(\bigl(\mathcal{P}/\!\!\sim_w,\;d_{\mathrm{TpOT},p}\bigr)\) is a geodesic metric space.  
      \item {Convexity of geodesics for \(p=2\).}  When \(p=2\), every geodesic in this space is \emph{convex}.  
      \item {Non‐negative curvature.}  The metric space  \(\bigl(\mathcal{P}/\!\!\sim_w,\;d_{\mathrm{TpOT},p}\bigr)\) has curvature bounded below by zero. This guarantees convexity of the cost functional and stability of interpolation.  
    \end{itemize}

    \revision{
    \section{Proofs for Section \ref{ssec:entropy}}\label{app:proof}
    \begin{proof}[Proof of Property \ref{thm:maximal}]
        We prove the property for the vertex-perspective entropy $\mathrm{HE}_V(H)$; the proof for $\mathrm{HE}_E(H)$ follows symmetrically. By Definition \ref{def:HEV}, $p(v) = \frac{L(v)}{I_{\text{total}}}$ constitutes a discrete probability distribution over the finite active vertex set $V^*$, satisfying $p(v) > 0$ for all $v \in V^*$ and $\sum_{v \in V^*} p(v) = 1$. By Gibbs' inequality, given two discrete probability distributions $P=\{p(v)\}_{v\in V^*}$ and $Q=\{q(v)\}_{v\in V^*}$, then 
        \[
            -\sum_{v \in V^*} p(v) \ln p(v) \leq -\sum_{v\in V^*} p(v) \ln q(v)
        \]
        with equality if and only if $p(v)=q(v),$ for$ v\in V^*$.
        Substitute  $q(v) = \frac{1}{|V^*|}$ into the inequality, we obtain that the vertex-perspective entropy is positively strictly bounded:
        \[
            0 < -\sum_{v \in V^*} p(v) \ln p(v) \leq \ln |V^*|
        \]
        The upper bound $\ln |V^*|$ is achieved if and only if the probability distribution is uniform, i.e., $p(v) = \frac{1}{|V^*|}$ for all $v \in V^*$. Substituting the definition of $p(v)$, this equality holds if and only if
        \[
        \frac{L(v)}{I_{\text{total}}} = \frac{1}{|V^*|} \implies L(v) = \frac{I_{\text{total}}}{|V^*|} \quad \forall v \in V^*
        \]
        This implies that the degree $L(v)$ is a constant for all active vertices. By definition in graph theory, a hypergraph where all vertices have the identical degree is a \emph{regular hypergraph}. Dually, $\mathrm{HE}_E(H) = \ln |E^*|$ if and only if $S(e)$ is constant for all $e \in E^*$, which defines a \emph{uniform hypergraph}.
    \end{proof}
    }
    \revision{

    \begin{proof}[Proof of Theorem \ref{thm:abrupt_change}]    
        Following the algebraic framework of Hoffman and Singleton\cite{hoffman1960moore} for constraining graph topologies via Diophantine equations, our proof analyzes the entropy transition using a number-theoretic approach rather than continuous-limit approximations. Let $H^- = (V, E^-)$ denote the hypergraph strictly before $t_c$, with total incidence $I$. Let $H^+ = (V, E^+)$ denote the hypergraph at $t_c$ with the new edge $e_{new}$ connecting a vertex subset $V_{new}$ ($|V_{new}| = k$). The new total incidence is $I + k$. 

        For notational simplicity, let $\Sigma = \sum_{v \in V^*} L(v) \ln L(v)$. Using Definition \ref{def:HEV}, the vertex-perspective entropy before the transition is:
        \[
        \mathrm{HE}_V(H^-) = -\sum_{v \in V^*} \frac{L(v)}{I} \ln\left(\frac{L(v)}{I}\right) = \ln I - \frac{\Sigma}{I}.
        \]
        
        After the transition, the degrees update to $L(v)+1$ for $v \in V_{new}$, and remain $L(v)$ for $v \notin V_{new}$. The local variation term is defined as $\Delta \Sigma = \sum_{v \in V_{new}} \bigl[ (L(v)+1) \ln(L(v)+1) - L(v) \ln L(v) \bigr]$. The new entropy is:
        \[
        \mathrm{HE}_V(H^+) = \ln(I+k) - \frac{\Sigma + \Delta \Sigma}{I+k}.
        \]
        
        We proceed by analyzing the exact condition under which the entropy remains unchanged. Assume $\mathrm{HE}_V(H^-) = \mathrm{HE}_V(H^+)$. Equating the two expressions and rearranging yields:
        \[
        \ln(I+k) - \ln I = \frac{\Sigma + \Delta \Sigma}{I+k} - \frac{\Sigma}{I} = \frac{I \cdot \Delta \Sigma - k \Sigma}{I(I+k)}.
        \]
        
        Multiplying both sides by $I(I+k)$ to isolate the terms with integer coefficients, we obtain:
        \[
        I(I+k) \ln(I+k) - I(I+k) \ln I = I \cdot \Delta \Sigma - k \Sigma.
        \]
        
        We expand $\Sigma$ and $\Delta \Sigma$ into their explicit vertex summations by partitioning $V^*$ into affected vertices ($v \in V_{new}$) and unaffected vertices ($v \notin V_{new}$):
        \begin{align*}
        I \cdot \Delta \Sigma - k \Sigma =\;& I \sum_{v \in V_{new}} \bigl[ (L(v)+1) \ln(L(v)+1) - L(v) \ln L(v) \bigr] \notag \\
        &- k \left( \sum_{v \in V_{new}} L(v) \ln L(v) + \sum_{v \notin V_{new}} L(v) \ln L(v) \right).
        \end{align*}
        
        Grouping the terms for $v \in V_{new}$ and $v \notin V_{new}$ separately, and utilizing the logarithmic identity $x \ln y = \ln(y^x)$, the right-hand side becomes:
        \[
        \sum_{v \in V_{new}} \ln\left( \frac{(L(v)+1)^{I(L(v)+1)}}{L(v)^{(I+k)L(v)}} \right) - \sum_{v \notin V_{new}} \ln\left( L(v)^{k L(v)} \right).
        \]
        
        Applying the same logarithmic identities to the left-hand side and equating both sides as single logarithms of products , we obtain:
        \[
        \ln \left( \frac{(I+k)^{I(I+k)}}{I^{I(I+k)}} \right) = \ln \left( \frac{ \prod_{v \in V_{new}} (L(v)+1)^{I(L(v)+1)} }{ \prod_{v \in V_{new}} L(v)^{(I+k)L(v)} \cdot \prod_{v \notin V_{new}} L(v)^{k L(v)} } \right).
        \]
        
        Since the logarithmic function is strictly monotonic, we can remove the logarithms, which yields a strict integer multiplicative identity:
        \begin{equation}\label{eq:diophantine}
        (I+k)^{I(I+k)} \prod_{v \notin V_{new}} L(v)^{k L(v)} \prod_{v \in V_{new}} L(v)^{(I+k)L(v)} = I^{I(I+k)} \prod_{v \in V_{new}} (L(v)+1)^{I(L(v)+1)}.
        \end{equation}
        
        Equation \eqref{eq:diophantine} represents a highly constrained non-linear Diophantine equation\cite{shorey1986exponential,terai2012exponential}. By the Fundamental Theorem of Arithmetic, both sides must yield the exact same prime factorization. 
        
        Notice that the addition of $e_{new}$ acts as a local topological perturbation, yet it introduces a massive global multiplier shift via the terms $(I+k)^{I(I+k)}$ and $I^{I(I+k)}$. Because $I$ and $I+k$ generally possess distinct prime factors (e.g., they are coprime if $k=1$), satisfying this equality demands that the exact missing prime factors be supplied by the degree sequences $L(v)$ of the vertices. 
        
        In the combinatorial space of hypergraphs, the degree sequence cannot arbitrarily absorb such macroscopic algebraic shifts without fundamentally restructuring the entire graph. A solution requires an exact matching of prime factors between the global size $I$ and local degrees $L(v)$. Consequently, equality holds only for a trivially small, mathematically contrived class of degree sequences. For generic structural transitions, the prime factorizations strictly diverge, guaranteeing $\mathrm{HE}_V(H^-) \neq \mathrm{HE}_V(H^+)$. 
    \end{proof}
    }
    \revision{

    \begin{proof}[Proof of Theorem \ref{thm:dual_change}]
        Let $H^- = (V, E^-)$ denote the hypergraph strictly before $t_c$, with total incidence $I$. Let $H^+ = (V, E^+)$ denote the hypergraph at $t_c$, where $E^+ = E^- \cup \{e_{new}\}$ and the size of the new hyperedge is $S(e_{new}) = k$. The new total incidence is $I + k$. 

        Note that unlike vertex degrees which update locally, the sizes of the existing hyperedges remain strictly unchanged: $S(e)$ is constant for all $e \in E^-$. Let $\Sigma_E = \sum_{e \in E^-} S(e) \ln S(e)$.
        
        Using Definition \ref{def:HEE}, the hyperedge-perspective entropy before the transition is:
        \[
        \mathrm{HE}_E(H^-) = -\sum_{e \in E^-} \frac{S(e)}{I} \ln\left(\frac{S(e)}{I}\right) = \ln I - \frac{\Sigma_E}{I}.
        \]
        
        After the topological transition, the summation expands to include the new hyperedge $e_{new}$, while the global denominator updates to $I+k$:
        \[
        \mathrm{HE}_E(H^+) = -\sum_{e \in E^-} \frac{S(e)}{I+k} \ln\left(\frac{S(e)}{I+k}\right) - \frac{k}{I+k} \ln\left(\frac{k}{I+k}\right) = \ln(I+k) - \frac{\Sigma_E + k \ln k}{I+k}.
        \]
        
        We analyze the condition for entropy stagnation by assuming $\mathrm{HE}_E(H^-) = \mathrm{HE}_E(H^+)$. Equating the two forms and rearranging yields:
        \[
        \ln(I+k) - \ln I = \frac{I(k \ln k) - k \Sigma_E}{I(I+k)}.
        \]
        
        Similar to the proof of Theorem \ref{thm:abrupt_change}, we derive a Diophantine equation
        \begin{equation}\label{eq:diophantine_edge}
        (I+k)^{I(I+k)} \prod_{e \in E^-} S(e)^{k S(e)} = I^{I(I+k)} k^{I k}.
        \end{equation}
        
        This identity reveals a mathematically rigid algebraic dependency. The right-hand side of Equation \eqref{eq:diophantine_edge} is completely determined by the macroscopic variables: the initial total incidence $I$ and the perturbation size $k$. Conversely, the left-hand side relies heavily on the microscopic topological distribution of all prior existing hyperedge sizes $S(e)$. 
        
        By the Fundamental Theorem of Arithmetic, this equality holds if and only if both sides share the exact same prime factorization. This condition requires the term $\prod S(e)^{k S(e)}$ to exactly offset the difference in prime factors. Such a prime factor alignment between the prior topological state and the global perturbation is combinatorially improbable. Therefore, for any generic topological transition, the equality fails, confirming that $\mathrm{HE}_E(H^-) \neq \mathrm{HE}_E(H^+)$.
    \end{proof}
    }
    \revision{

    \begin{proof}[Proof of Theorem \ref{thm:iso}]
        Since the isomorphism preserves incidence, the degree mapping is preserved:
        \[
        L_1(v) = L_2(\phi(v)) \quad \text{for all } v \in V_1.
        \]
        \[
        S_1(e) = S_2(\psi(e)) \quad \text{for all } e \in E_1.
        \]
        Consequently, the total incidence \(I_{\text{total}}\) is identical for both hypergraphs, yielding identical probability distributions up to a permutation of indices. Since the Shannon entropy is permutation-invariant (symmetric with respect to its arguments), the summations yield identical scalar values.
    \end{proof}

    \begin{proof}[Proof of Theorem \ref{thm:upper_bound}]
        By construction, the active hyperedge set \(E^*\) corresponds exactly to the set of homological generators that have non-zero birth–death persistence in the chosen parameter range. The cardinality of this set is precisely the sum of the number of persistent generators, i.e., \(|E^*| = \sum_{k} |D_k|\). Substituting this into Property \ref{thm:maximal} directly yields
        \[
        \mathrm{HE}_E(H) \leq \ln(|E^*|) = \ln\left(\sum_{k} |D_k|\right).
        \]
    \end{proof}
    }

\newpage
\bibliographystyle{unsrt}
\bibliography{references}

\end{document}